\newif\ifappendix
\newtheorem{theorem}{Theorem}
\theoremstyle{definition}
\title{$\varepsilon$-{BMC}: A Bayesian Ensemble Approach to Epsilon-Greedy Exploration in Model-Free Reinforcement Learning}
\author{ 
{\bf Michael Gimelfarb} \\
\And
{\bf Scott Sanner} \\ \\
Mechanical and Industrial Engineering \\
University of Toronto \\
ON M5S 3G8, Canada \\
\And
{\bf Chi-Guhn Lee}
}
\begin{document}

\maketitle

\begin{abstract}
Resolving the exploration-exploitation trade-off remains a fundamental problem in the design and implementation of reinforcement learning (RL) algorithms. In this paper, we focus on model-free RL using the epsilon-greedy exploration policy, which despite its simplicity, remains one of the most frequently used forms of exploration. However, a key limitation of this policy is the specification of $\varepsilon$. In this paper, we provide a novel Bayesian perspective of $\varepsilon$ as a measure of the uniformity of the Q-value function. We introduce a closed-form Bayesian model update based on Bayesian model combination (BMC), based on this new perspective, which allows us to adapt $\varepsilon$ using experiences from the environment in constant time with monotone convergence guarantees. We demonstrate that our proposed algorithm, $\varepsilon$-\texttt{BMC}, efficiently balances exploration and exploitation on different problems, performing comparably or outperforming the best tuned fixed annealing schedules and an alternative data-dependent $\varepsilon$ adaptation scheme proposed in the literature.
\end{abstract}

\section{INTRODUCTION}
\label{sec:intro}

Balancing exploration with exploitation is a well-known and important problem in reinforcement learning \citep{sutton2018reinforcement}. If the behaviour policy focuses too much on exploration rather than exploitation, then this could hurt the performance in an on-line setting. Furthermore, on-policy algorithms such as SARSA or TD($\lambda$) might not converge to a good policy. On the other hand, if the exploration policy focuses too much on exploitation rather than exploration, then the state space might not be explored sufficiently and an optimal policy would not be found. 


Historically, numerous exploration policies have been proposed for addressing the exploration-exploitation trade-off in model-free reinforcement learning, including {Boltzmann exploration} and {epsilon-greedy} \citep{mcfarlane2018survey}. There, this trade-off is often controlled by one or more tuning parameters, such as $\varepsilon$ in epsilon-greedy or the temperature parameter in Boltzmann exploration. However, these parameters typically have to be handcrafted or tuned for each task in order to obtain good performance. This motivates the design of exploration algorithms that adapt their behaviour according to some measure of the learning progress. The simplest approaches adapt the tuning parameters of a fixed class of exploration policies such as epsilon-greedy \citep{tokic2010adaptive}. Other methods, such as count-based exploration \citep{thrun1992efficient,bellemare2016unifying,ostrovski2017count} and Bayesian Q-learning  \citep{dearden1998bayesian}, use specialized techniques to develop new classes of exploration policies.

However, despite the recent developments in exploration strategies, epsilon-greedy is still often the exploration approach of choice \citep{vermorel2005multi,heidrich2009interview,mnih2015human,van2016deep}. Epsilon-greedy is both intuitive and simpler to tune than other approaches, since it is completely parameterized by one parameter, $\varepsilon$. Another benefit of this policy is that it can be easily combined with more sophisticated frameworks, such as options \citep{bacon2017option}. Unfortunately, the performance of epsilon-greedy in practice is highly sensitive to the choice of $\varepsilon$, and existing methods for adapting $\varepsilon$ from data are ad-hoc and offer little theoretical justification. 

In this paper, we take a fully \emph{Bayesian} perspective on adapting $\varepsilon$ based on return data. Recent work has demonstrated the strong potential of a Bayesian approach for parameter tuning in model-free reinforcement learning  \citep{downey2010temporal}. Another key advantage of a fully Bayesian approach over heuristics is the ability to specify priors on parameters, such as the predictive inverse variance of returns, $\tau$ in this work, which are more robust to noise or temporary digressions in the learning process. In addition, our approach can be combined with other exploration policies such as Boltzmann exploration \citep{tokic2011value}. Specifically, we contribute:
\begin{enumerate}
    \item A new Bayesian perspective of expected SARSA as an $\varepsilon$-weighted mixture of two models, the
    greedy (Q-learning) bootstrap and one which averages uniformly over all Q-values (Section \ref{subsec:sarsa});
    \item A Bayesian algorithm $\varepsilon$-\texttt{BMC} (Algorithm \ref{alg:bmc}) that is robust (Section \ref{subsec:bayes_q}), general, and adapts $\varepsilon$ efficiently (Section \ref{subsec:bmc});
    \item A theoretical convergence guarantee of our proposed algorithm (Theorem \ref{theorem:convergence-bmc}).
\end{enumerate}

Empirically, we evaluate the performance of $\varepsilon$-\texttt{BMC} on domains with discrete and continuous state spaces, using tabular and approximate RL methods. We empirically show that our algorithm can outperform exploration strategies that fix or anneal $\varepsilon$ based on time, and even existing adaptive algorithms. In the end, $\varepsilon$-\texttt{BMC} is a novel, efficient and general approach to adapting the exploration parameter in epsilon-greedy policies that empirically outperforms a variety of fixed annealing schedules and other ad-hoc approaches.

\section{RELATED WORK}

Our paper falls within the scope of adaptive epsilon greedy algorithms. Perhaps the most similar approach to our work is the Value Differences Based Exploration (VDBE) algorithm of \cite{tokic2010adaptive}, in which $\varepsilon$ was modelled using a moving average and updated according to the Bellman (TD) error. However, that algorithm was presented for stationary-reward multi-armed bandits. \cite{tokic2011value} combined the ideas of VDBE with Boltzmann exploration to create the VDBE-Softmax algorithm. \cite{dos2017adaptive} later developed a similar heuristic algorithm that worked on non-stationary multi-armed bandit problems. However, all these approaches are heuristic in nature; our paper approaches the problem from a Bayesian perspective.

Modelling Q-values using normal-gamma priors, as done in our paper, is a cornerstone of Bayesian Q-learning \citep{dearden1998bayesian}. However, that paper is fundamentally different from ours, in that it addresses the problem of exploration by adding a bonus to the Q-values that estimates the myopic \emph{value of perfect information}. Our paper, on the other hand, applies the normal-gamma prior only to model the variance of the returns, while the exploration is handled using the epsilon-greedy policy with $\varepsilon$ modelled using a Beta distribution.

\section{PRELIMINARIES}
\label{sec:prelim}

\subsection{MARKOV DECISION PROCESSES}
\label{subsec:mdp}

In this paper, we denote a \emph{Markov decision process} (MDP) as a tuple $\langle \mathcal{S}, \mathcal{A}, T, R, \gamma \rangle$, where $\mathcal{S}$ is a set of states, $\mathcal{A}$ is a finite set of actions, $T : \mathcal{S} \times \mathcal{A} \times \mathcal{S} \to [0, \infty)$ is a transition function for the system state, $R : \mathcal{S} \times \mathcal{A} \times \mathcal{S} \to \mathbb{R}$ is a bounded reward function, and $\gamma \in (0, 1)$ is a discount factor.

Randomized exploration policies are sequences of mappings from states to probability distributions over actions. Given an MDP $\langle \mathcal{S}, \mathcal{A}, T, R, \gamma \rangle$, for each state-action pair $(s,a)$ and policy $\pi$, we define the expected return
\begin{equation*}
    Q^\pi(s,a) = \mathbb{E}_{\pi,T}\left[\sum_{t = 0}^\infty \gamma^t r_{t+1} \Big| s_0 = s, a_0 = a \right],
\end{equation*}
where $s_t$ is sampled from $T$, $a_t$ are sampled from $\pi(\cdot | s_t)$ and $r_{t+1} = R(s_t, a_t, s_{t+1})$. The associated value function is $V^\pi(s) = \max_{a\in \mathcal{A}} Q^\pi(s,a)$, and the objective is to learn an optimal policy $\pi^*$ that attains the supremum of $V^\pi(s)$ over all policies. \cite{puterman2014markov} contains a more detailed treatment of this subject.

\subsection{REINFORCEMENT LEARNING}
\label{subsec:rl}

In the reinforcement learning setting, neither $T$ nor $R$ are known, so optimal policies are learned from experience, defined as sequences of transitions $(s_t, a_t, r_{t+1}, s_{t+1}, a_{t+1}), \ t = 0, 1,\dots$ broken up into \emph{episodes}. Here, states and rewards are sampled from the environment, and actions follow some exploration policy $\pi$. Given an estimate $\tilde{G}_t$ of the expected return at time $t$ starting from state $s=s_t$ and taking action $a=a_t$, \emph{temporal difference (TD) learning} updates the Q-values as follows:
\begin{equation*}
    Q_{t+1}(s,a) = Q_t(s,a) + \eta_t \left(\tilde{G}_{t} - Q_t(s,a)\right),
\end{equation*}
where $\eta_t \in (0,1]$ is a problem-dependent learning rate parameter.

Typically, $\tilde{G}_{t}$ is computed by bootstrapping from the current Q-values, in order to reduce variance. Two of the most popular bootstrapping algorithms are \emph{Q-learning} and \emph{SARSA}, given respectively as:
\begin{align}
\label{eqn:q}
    \tilde{G}_{t}^{Q} &= r_{t+1} + \gamma \max_{a' \in \mathcal{A}} Q_t(s_{t+1}, a'), \\
\label{eqn:sarsa}
    \tilde{G}_{t}^{SARSA} &= r_{t+1} + \gamma Q_t(s_{t+1}, a_{t+1}).
\end{align}
Q-learning is an example of an off-policy algorithm, whereas SARSA is on-policy. Under relatively mild conditions and in tabular settings, Q-learning has been shown to converge to the optimal policy with probability one \citep{watkins1992q}.

One additional algorithm that is important in this work is \emph{expected SARSA}
\begin{equation}
\label{eqn:expected-sarsa}
    \tilde{G}_{t}^{ExpS} = r_{t+1} + \gamma \mathbb{E}_{a' \sim \pi}[Q_t(s_{t+1}, a')],
\end{equation}
which is similar to SARSA, but in which the uncertainty of the next action $a_{t+1}$ with respect to $\pi$ is averaged out. This results in considerable variance reduction as compared to SARSA, and theoretical properties of this algorithm are detailed in \cite{van2009theoretical}. \cite{sutton2018reinforcement} provides a comprehensive treatment of reinforcement learning methods.

\subsection{EPSILON-GREEDY POLICY}
\label{subsec:epsilon_greedy}

In this paper, exploration is carried out using \emph{$\varepsilon$-greedy policies}, defined formally as
\begin{equation}
\label{eqn:epsilon-greedy}
    \pi^{\varepsilon}(a | s) 
    = \begin{cases}
        1 - \varepsilon_t + \frac{\varepsilon_t}{|\mathcal{A}|} &\mbox{ if } a = \arg \max_{a'\in \mathcal{A}} Q_t(s,a')\\
         \frac{\varepsilon_t}{|\mathcal{A}|} &\mbox{ otherwise }
    \end{cases}.
\end{equation}
In other words, $\pi^{\varepsilon}$ samples a random action from $\mathcal{A}$ with probability $\varepsilon_t \in [0,1]$, and otherwise selects the greedy action according to $Q_t$. As a result, $\varepsilon_t$ can be interpreted as the relative importance placed on exploration.

The optimal value of the parameter $\varepsilon_t$ is typically problem-dependent, and found through experimentation. Often, $\varepsilon_t$ is annealed over time in order to favor exploration at the beginning, and exploitation closer to convergence \citep{sutton2018reinforcement}. However, such approaches are not adaptive since they do not take into account the learning process of the agent. In this paper, our main objective is to derive a data-driven tuning strategy for $\varepsilon_t$, that depends on current learning progress rather than trial and error.

\section{ADAPTIVE EPSILON-GREEDY}
\label{sec:main}

In this section, we show how the expected return under epsilon-greedy policies can be written as an average of two return models weighted by $\varepsilon$. There are two relevant Bayesian methods for combining multiple models based on evidence: \emph{Bayesian model averaging} (BMA), and \emph{Bayesian model combination} (BMC). Generally, the Bayesian model combination approach is preferred to model averaging, since it provides a richer space of hypotheses and reduced variance \citep{minka2000bayesian}. By interpreting $\varepsilon$ as a random variable whose posterior distribution can be updated on the basis of observed data, BMC naturally leads to a method for $\varepsilon$ adaptation.


\subsection{A BAYESIAN INTERPRETATION OF EXPECTED SARSA WITH THE EPSILON-GREEDY POLICY}
\label{subsec:sarsa}

We begin by combining the definition of expected SARSA (\ref{eqn:expected-sarsa}) with the $\varepsilon$-greedy policy (\ref{eqn:epsilon-greedy}). For $s'=s_{t+1}$, $a^* = \arg\max_{a'} Q_t(s',a')$, and $r' = r_{t+1}$ we have
\begin{align}
    &\tilde{G}_{t}^{ExpS} \nonumber \\ 
    &= r' +  \gamma \sum_{a \in \mathcal{A}} \pi^{\varepsilon} (a | s') Q_t(s',a) \nonumber \\
    &= r' + \gamma \left(1 - \varepsilon_t + \frac{\varepsilon_t}{|\mathcal{A}|}\right) Q_t(s',a^*) \nonumber + \gamma \frac{\varepsilon_t}{|\mathcal{A}|} \sum_{a \not= a^*}  Q_t(s',a) \nonumber \\
    &= r' + \left(1 - \varepsilon_t \right) \gamma Q_t(s',a^*) + \varepsilon_t \gamma \frac{1}{|\mathcal{A}|} \sum_{a \in \mathcal{A}} Q_t(s',a) \nonumber \\
    \label{eqn:expected-sarsa-experts}
    &= \left(1 - \varepsilon_t \right) \tilde{G}_{t}^{Q} + \varepsilon_t \tilde{G}_{t}^{U},
\end{align}
where $\tilde{G}_{t}^{Q}$ is the Q-learning bootstrap (\ref{eqn:q}) and 
\begin{equation}
\label{eqn:uniform}
    \tilde{G}_{t}^{U} = r_{t+1} + \gamma \frac{1}{|\mathcal{A}|} \sum_{a' \in \mathcal{A}} Q_t(s_{t+1}, a')
\end{equation}
is an estimate that uniformly averages over all the action-values, which we call the \emph{uniform} model. This leads to the following important observation: \emph{expected SARSA can be viewed as a probability-weighted average of two models, the greedy model $\tilde{G}_{t}^{Q}$ that trusts the current Q-value estimates and acts optimally with respect to them, and the uniform model $\tilde{G}_{t}^{U}$ that completely distrusts the current Q-value estimates and consequently places a uniform belief over them.} Under this interpretation, $\varepsilon_t$ and $1 - \varepsilon_t$ are the posterior beliefs assigned to the two aforementioned models, respectively. In the following subsections, we verify this simple fact algebraically in the context of Bayesian model combination. We also develop a method for maintaining the (approximate) posterior belief state efficiently, with a computational cost that is constant in both space and time, and with provable convergence guarantees.

\subsection{BAYESIAN Q-LEARNING}
\label{subsec:bayes_q}

In order to facilitate tractable learning and inference, we assume that the return observation $q_{s,a}$ at time $t$, given the model $m \in \lbrace Q, U\rbrace$, is normally distributed:
\begin{equation}
\label{eqn:bayesian-q}
\begin{aligned}
    q_{s,a} | m, \tau \sim \mathcal{N}\left(\tilde{G}_t^m, {\tau}^{-1}\right),
\end{aligned}
\end{equation}
where the means $\tilde{G}_t^Q$ and $\tilde{G}_t^U$ are given in (\ref{eqn:q}) and (\ref{eqn:uniform}), respectively, and $\tau > 0$ is the inverse of the variance, or \emph{precision}. This assumption can be justified, by viewing the return as a discounted sum of future (random) reward observations, and appealing to the central limit theorem when $\gamma$ is close to 1 and the MDP is ergodic \citep{dearden1998bayesian}. 

There are two special cases of interest in this work. In the first case, $\tau$ is allowed to be constant across all state-action pairs and models, and naturally leads to a state-independent $\varepsilon$ adaptation. This approach is particularly advantageous when it is costly or impossible to maintain independent statistics per state, such as when the state space is very large or continuous in nature. In the second case, independent statistics are maintained per state and lead to state-dependent exploration. 

In order to update $\tau$, we consider the standard normal-gamma model:
\begin{equation}
\label{eqn:gaussian-gamma-prior}
\begin{aligned}
    \mu, \tau &\sim \textrm{NormalGamma}\left(\mu_0, \tau_0, a_0, b_0\right),\\
    q_{s,a} | \mu, \tau &\sim \mathcal{N}\left(\mu, \tau^{-1}\right),
\end{aligned}
\end{equation}
where $q_{s,a}$ are i.i.d. given $\mu$ and $\tau$. Since the returns in different state-action pairs are dependent, this assumption is likely to be violated in practice. However, it leads to a compact learning representation necessary for tractable Bayesian inference, and has been used effectively in the existing literature in similar forms \citep{dearden1998bayesian}. Furthermore, (\ref{eqn:gaussian-gamma-prior}) is not used to model the Q-values directly in our paper, but rather, to facilitate robust estimation of $\tau$, as we now show.

Given data $\mathcal{D} = \lbrace q_{s_i,a_i,i} \,|\, i = 0, 1 \dots t -1 \rbrace$ of previously observed returns, the joint posterior distribution of $\mu$ and $\tau$ with likelihood (\ref{eqn:bayesian-q}) and prior (\ref{eqn:gaussian-gamma-prior}) is also normal-gamma distributed, and so the marginal posterior distribution of $\tau$, $\mathbb{P}(\tau | \mathcal{D})$, is gamma distributed with parameters:
\begin{equation}
\label{eqn:tau_posterior}
\begin{aligned}
    a_{t} &= a_0 + \frac{t}{2}, \\
    b_{t} &= b_0 + \frac{t}{2} \left( \hat{\sigma}_t^2 + \frac{\tau_0}{\tau_0 + t} (\hat{\mu}_t - \mu_0)^2 \right),
\end{aligned}
\end{equation}
where $\hat{\mu}_t$ and $\hat{\sigma}_t^2$ are the sample mean and variance of the returns in $\mathcal{D}$, respectively \citep{bishop2006prml}. These quantities can be updated online after each new observation $d'$ in constant time \citep{welford1962note}.

Finally, for each model $m \in \lbrace Q, U\rbrace$, we marginalize over the uncertainty in $\tau$, using (\ref{eqn:bayesian-q}) and (\ref{eqn:tau_posterior}) as follows:
\begin{align*}
    \mathbb{P}(q_{s,a} | m, \mathcal{D}) 
    &= \int_{0}^{\infty} \mathbb{P}(q_{s,a} | m, \tau) \mathbb{P}(\tau | \mathcal{D}) \,\mathrm{d}\tau \\
    &\propto \int_{0}^{\infty}
    \tau^{1/2} e^{-\frac{\tau}{2}(q_{s,a} - \tilde{G}_t^m)^2} \tau^{a_t-1} e^{-b_{t} \tau}  \,\mathrm{d}\tau \\
    &= \int_{0}^{\infty} \tau^{a_{t} + \frac{1}{2} - 1}  e^{- \left(b_{t} + \frac{1}{2}(q_{s,a} - \tilde{G}_t^m)^2\right) \tau} \,\mathrm{d}\tau \\
    &\propto \left(b_{t} +  \frac{1}{2}(q_{s,a} - \tilde{G}_t^m)^2\right)^{-\frac{2 a_{t} + 1}{2}}.
\end{align*}
Finally, we have:
\begin{equation}
\label{eqn:precision_out}
    \begin{aligned}
        q_{s,a} | m, \mathcal{D} &\sim \textrm{St}\left(\tilde{G}_t^m, \frac{a_{t}}{b_{t}},2 a_{t}\right),
    \end{aligned}
\end{equation}
where $\textrm{St}(\mu,\lambda,\nu)$ is the three-parameter Student t-distribution \citep{bishop2006prml}. Therefore, marginalizing over the unknown precision $\tau$ leads to a t-distributed likelihood function. Alternatively, one could simply use the Gaussian likelihood in equation (\ref{eqn:bayesian-q}) and treat $\tau$ as a problem-dependent tuning parameter. However, the heavy tail property of the t-distribution is advantageous in the non-stationary setting typically encountered in reinforcement learning applications, where Q-values change during the learning phase. We now show how to link this update with the expected SARSA decomposition (\ref{eqn:expected-sarsa-experts}) to derive an adaptive epsilon-greedy policy.

\subsection{EPSILON-{BMC}: ADAPTING EPSILON USING BAYESIAN MODEL COMBINATION}
\label{subsec:bmc}

In the general setting of \emph{Bayesian model combination}, we model the uncertainty in Q-values for each state-action pair $(s, a)$ as random variables with posterior distribution $\mathbb{P}(q_{s,a} | \mathcal{D})$. The expected posterior return can be written as an average over all possible \emph{combinations} of greedy and uniform model,
\begin{equation}
\label{eqn:bmc}
    \mathbb{E}[q_{s,a} | \mathcal{D}] = \int_0^1 \mathbb{E}[q_{s,a} | w]\ \mathbb{P}(w | \mathcal{D}) \,\mathrm{d}w,
\end{equation}
where $w$ is the weight assigned to the uniform model and $1 - w$ is the weight assigned to the greedy model \citep{monteith2011turning}. As will be verified shortly, the expectation of this weight $w$ given the past return data $\mathcal{D}$ will turn out to be a Bayesian interpretation of $\varepsilon_t$. 

The belief over $w$ is maintained as a posterior distribution $p_t(w) = \mathbb{P}(w | \mathcal{D})$. Continuing from (\ref{eqn:bmc}) and using (\ref{eqn:precision_out}):
\begin{align*}
    &\mathbb{E}[q_{s,a} | \mathcal{D}] \\
    &= \int_0^1 \mathbb{E}[q_{s,a} | w, \mathcal{D}]\ \mathbb{P}(w | \mathcal{D}) \,\mathrm{d}w \\
    &= \int_0^1 \sum_{m \in \lbrace Q, U\rbrace} \mathbb{E}[q_{s,a} | m, \mathcal{D}] \ \mathbb{P}(m | w)\ \mathbb{P}(w | \mathcal{D}) \,\mathrm{d}w \\
    &= \int_0^1 \left(\mathbb{E}[q_{s,a} | Q, \mathcal{D}] (1 - w) + \mathbb{E}[q_{s,a} | U, \mathcal{D}] w\right) \mathbb{P}(w | \mathcal{D}) \,\mathrm{d}w \\
    &=  \left(1 - \mathbb{E}[w | \mathcal{D}]\right) \mathbb{E}[ q_{s,a} | Q, \mathcal{D}] + \mathbb{E}[w | \mathcal{D}] \ \mathbb{E}[q_{s,a} | U, \mathcal{D}]  \\
    &= \left(1 - \mathbb{E}[w | \mathcal{D}]\right) \tilde{G}_{t}^{Q} + \mathbb{E}[w | \mathcal{D}] \tilde{G}_{t}^{U},
\end{align*}
which is exactly (\ref{eqn:expected-sarsa-experts}) except that now $\varepsilon_t = \mathbb{E}[w | \mathcal{D}]$. We have thus shown that \emph{the expected SARSA bootstraps with data-driven $\varepsilon_t$ can be viewed in terms of Bayesian model combination.} We denote this new estimate $\varepsilon_t^{BMC}$.

The posterior distribution $p_{t}(w) = \mathbb{P}(w | \mathcal{D})$ is updated recursively by Bayes' rule:
\begin{align*}
    p_{t}(w)
    &\propto \mathbb{P}(d' | w, \mathcal{D}) p_{t-1}(w) \\
    &\propto \sum_{m \in \lbrace Q, U \rbrace} \mathbb{P}(d' | m, \mathcal{D})\mathbb{P}(m | w) p_{t-1}(w) \\
    &\propto \left(\mathbb{P}(d' | U, \mathcal{D}) w + \mathbb{P}(d' | Q, \mathcal{D}) (1 - w) \right) p_{t-1}(w),
\end{align*}
for every new observation $d'$. Since the number of terms in $p_t$ grows exponentially in $|\mathcal{D}|$, it is necessary to use posterior approximation techniques to compute $\mathbb{E}[w | \mathcal{D}]$. 

One approach to address the intractability of computing an exact posterior $p_{t}(w)$ is to sample directly from the distribution. However, such an approach is inherently noisy and inefficient in practice. Instead, we apply the Dirichlet moment-matching technique \citep{hsu2016online,gimelfarb2018} that was shown to be effective at differentiating between good and bad models from data, easy to implement, and efficient. In particular, we apply the approach in \cite{gimelfarb2018} with the models $\tilde{G}_t^Q$ and $\tilde{G}_t^U$, by matching the first and second moments of $p_t$ to those of the beta distribution $\mathrm{Beta}(\alpha_{t},\beta_{t})$ and solving the resulting system of equations for $\alpha_{t}$ and $\beta_{t}$. The closed-form solution is:
\begin{align}        
\label{eqn:first-bmc}
    m_{t} &= \frac{\alpha_{t}}{\alpha_{t} + \beta_{t} + 1} \frac{e_t^U(\alpha_{t}+1) + e_t^Q\beta_{t}}{e_t^U\alpha_{t} + e_t^Q\beta_{t}}, \\
\label{eqn:second-bmc}
    v_{t}
    &= \frac{\alpha_{t}}{\alpha_{t} + \beta_{t} + 1} \frac{\alpha_{t} + 1}{\alpha_{t} + \beta_{t} + 2} \frac{e_t^U(\alpha_{t}+2) + e_t^Q\beta_{t}}{e_t^U\alpha_{t} + e_t^Q\beta_{t}}, \\
\label{eqn:third-bmc}
    r_{t} &= \frac{m_{t} - v_{t}}{v_{t} - m_{t}^2}, \\
\label{eqn:fourth-bmc}
    \alpha_{t+1} &= m_{t} r_{t}, \\
    \beta_{t+1} &= (1 - m_{t}) r_{t} \label{eqn:last-bmc},
\end{align}
where $e_t^U$ and $e_t^Q$ are the respective probabilities of observing a return $d' = \tilde{G}_{t}^{ExpS}$ under the distributions (\ref{eqn:precision_out}). It follows that
\begin{align}
\label{eqn:eg-bmc}
    \varepsilon_t^{BMC} \approx \mathbb{E}_{\mathrm{Beta}(\alpha_t, \beta_t)}[w | \mathcal{D}] = \frac{\alpha_t}{\alpha_t + \beta_t}.
\end{align}
All quantities, including $a_t$, $b_t$, $\alpha_t$ and $\beta_t$, can be computed online in constant time and with minimal storage overhead without caching $\mathcal{D}$. We call this approach \texttt{$\varepsilon$-BMC} and present the corresponding pseudo-code in Algorithm~\ref{alg:bmc}. Therein, lines with a * indicate additions to the ordinary expected SARSA algorithm.

\begin{algorithm}
	\caption{\texttt{$\varepsilon$-BMC} with Expected SARSA}
	\label{alg:bmc}
	\begin{algorithmic}[1]	
	    \STATE* initialize $\mu_0, \tau_0, a_0, b_0, \hat{\mu}=0, \hat{\sigma}^2=\infty,\alpha,\beta$
		\FOR{each episode}
		    \STATE initialize $s$
		    \FOR{each step in the episode}
		        \STATE* $\varepsilon \gets \frac{\alpha}{\alpha + \beta}$
		        \STATE choose action $a$ using  $\varepsilon$-greedy policy $\pi^{\varepsilon}$ (\ref{eqn:epsilon-greedy})
		        \STATE take action $a$, observe $r$ and $s'$
		        \STATE $\tilde{G}^{Q} \gets r + \gamma \max_{a'} Q(s', a')$
        		\STATE $\tilde{G}^{U} \gets r + \gamma \frac{1}{|\mathcal{A}|} \sum_{a'} Q(s', a')$
		        \STATE $\tilde{G}^{ExpS} \gets r + \gamma \sum_{a'} \pi^{\varepsilon}(a' | s') Q(s', a')$ 
		        \COMMENT{note $\tilde{G}^{ExpS} = (1 - \varepsilon) \tilde{G}^Q + \varepsilon \tilde{G}^U$}
		        \STATE $Q(s,a) \gets Q(s,a) + \eta [\tilde{G}^{ExpS} - Q(s, a)]$ 
		        \STATE* update $\hat{\mu}$ and $\hat{\sigma}^2$ using observation $\tilde{G}^{ExpS}$
		        \STATE* compute $a$ and $b$ using (\ref{eqn:tau_posterior})
		        \STATE* compute $e^Q$ and $e^U$ using (\ref{eqn:precision_out})
        		\STATE* update $\alpha$ and $\beta$ using (\ref{eqn:first-bmc})-(\ref{eqn:last-bmc})
		        \STATE $s \gets s'$ 
		    \ENDFOR
		\ENDFOR
	\end{algorithmic}
\end{algorithm}


In Algorithm \ref{alg:bmc}, the expected SARSA return $\tilde{G}^{ExpS}$ was used to update both the posterior on $\varepsilon$ and the Q-values. However, it is possible to update the Q-values in line 11 using a different estimator of the future return, including Q-learning (\ref{eqn:q}), SARSA (\ref{eqn:sarsa}), or other approaches. The Q-values could also be approximated using a deep neural network or other function approximator. The algorithm can also be run off-line by caching $\mathcal{D}$ for an entire episode and then updating $\varepsilon$. State-dependent exploration can be easily implemented by maintaining the posterior statistics independently for each state, or approximating them using a neural network.

A final advantage of our algorithm is a provable convergence guarantee under fairly general assumptions.

\begin{theorem}[Monotone Convergence of $\varepsilon$-BMC]
\label{theorem:convergence-bmc}
    Suppose $0 < \alpha_0 \leq \beta_0 < \infty$. Then, $\varepsilon_{t+1}^{BMC} \leq \varepsilon_t^{BMC}$ for all $t =  0, 1\dots$, therefore $\varepsilon_t^{BMC}$ converges as $t \to \infty$. 
\end{theorem}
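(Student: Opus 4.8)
The plan is to reduce the monotonicity claim to an elementary comparison of two likelihoods and then close the argument by induction. The first observation is that the moment-matching update collapses cleanly at the level of the mean: since $\alpha_{t+1} = m_t r_t$ and $\beta_{t+1} = (1-m_t) r_t$ from (\ref{eqn:fourth-bmc})--(\ref{eqn:last-bmc}), the common factor $r_t$ cancels and $\varepsilon_{t+1}^{BMC} = \alpha_{t+1}/(\alpha_{t+1}+\beta_{t+1}) = m_t$. Hence proving $\varepsilon_{t+1}^{BMC} \le \varepsilon_t^{BMC}$ is the same as proving $m_t \le \alpha_t/(\alpha_t+\beta_t)$.

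First I would do the algebra of (\ref{eqn:first-bmc}). Writing $m_t = \tfrac{\alpha_t}{\alpha_t+\beta_t+1}\cdot \tfrac{N}{D}$ with $N = e_t^U(\alpha_t+1)+e_t^Q\beta_t$ and $D = e_t^U\alpha_t + e_t^Q\beta_t$, the key simplification is $N - D = e_t^U$, so $N/D = 1 + e_t^U/D$. Substituting and clearing denominators, the inequality $m_t \le \alpha_t/(\alpha_t+\beta_t)$ reduces, after cancelling $\alpha_t>0$, to $(\alpha_t+\beta_t)e_t^U \le D$, i.e.\ $\beta_t e_t^U \le \beta_t e_t^Q$, which (since $\beta_t>0$) is exactly $e_t^U \le e_t^Q$. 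Thus a monotone decrease at step $t$ holds if and only if the greedy model assigns at least as much likelihood to the observation as the uniform model.

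The heart of the argument is to establish $e_t^U \le e_t^Q$ from the structure of the target. The point fed into both likelihoods is $d' = \tilde{G}_t^{ExpS}$, which by (\ref{eqn:expected-sarsa-experts}) is the convex combination $(1-\varepsilon_t^{BMC})\tilde{G}_t^Q + \varepsilon_t^{BMC}\tilde{G}_t^U$. Hence, writing $\Delta = \tilde{G}_t^U - \tilde{G}_t^Q$, the distances to the two means are $|d' - \tilde{G}_t^Q| = \varepsilon_t^{BMC}|\Delta|$ and $|d'-\tilde{G}_t^U| = (1-\varepsilon_t^{BMC})|\Delta|$. Because $e_t^U$ and $e_t^Q$ are values of the \emph{same} Student-t density (\ref{eqn:precision_out}) --- identical scale $a_t/b_t$ and degrees of freedom $2a_t$, differing only in the mean --- and that density is symmetric and strictly decreasing in the distance from its mean, we obtain $e_t^Q \ge e_t^U \iff |d'-\tilde{G}_t^Q| \le |d'-\tilde{G}_t^U| \iff \varepsilon_t^{BMC} \le 1-\varepsilon_t^{BMC} \iff \varepsilon_t^{BMC} \le \tfrac{1}{2}$. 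The degenerate case $\Delta = 0$ gives $e_t^U = e_t^Q$ and $\varepsilon_{t+1}^{BMC} = \varepsilon_t^{BMC}$, consistent with the weak inequality.

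It then remains to run the induction with invariant $\varepsilon_t^{BMC} \le \tfrac{1}{2}$. The base case is precisely the hypothesis $\alpha_0 \le \beta_0$, which gives $\varepsilon_0^{BMC} = \alpha_0/(\alpha_0+\beta_0) \le \tfrac12$. For the inductive step, if $\varepsilon_t^{BMC}\le\tfrac12$ then the density comparison yields $e_t^U \le e_t^Q$, the algebra of the previous step yields $\varepsilon_{t+1}^{BMC}\le\varepsilon_t^{BMC}$, and in particular $\varepsilon_{t+1}^{BMC}\le\tfrac12$, so the invariant propagates and monotonicity holds for every $t$. Since $\alpha_t,\beta_t>0$ keeps $\varepsilon_t^{BMC}\in(0,1)$, the sequence is nonincreasing and bounded below by $0$, and the monotone convergence theorem delivers convergence. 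I expect the main obstacle to be the density comparison --- cleanly tying the likelihood ordering $e_t^U$ versus $e_t^Q$ to the threshold $\tfrac12$ through symmetry of the t-density and the convex-combination form of $\tilde{G}_t^{ExpS}$ --- together with the bookkeeping that guarantees the updates stay well defined (in particular $r_t>0$ and $\alpha_t,\beta_t>0$, so that the cancellation in the first step and the sign of $\beta_t$ in the second are legitimate).
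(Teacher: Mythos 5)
Your proposal is correct and takes essentially the same route as the paper's proof: both reduce $\varepsilon_{t+1}^{BMC} \leq \varepsilon_t^{BMC}$ to the likelihood comparison $e_t^U \leq e_t^Q$, derive that comparison from the convex-combination form of $\tilde{G}_t^{ExpS}$ together with the symmetry and unimodality of the shared Student-t density, and close via induction on the invariant $\varepsilon_t^{BMC} \leq \tfrac{1}{2}$ (base case from $\alpha_0 \leq \beta_0$) and the monotone convergence theorem. The only difference is cosmetic --- you compare $m_t$ with $\alpha_t/(\alpha_t+\beta_t)$ directly via the identity $N - D = e_t^U$, whereas the paper re-expresses $\varepsilon_{t+1}$ through $m_{t-1}$ and $r_{t-1}$ in (\ref{eqn:epsilon_decay}) --- and your version has the minor merit of making explicit both the equivalence (not just the implication) and the density-monotonicity fact the paper uses implicitly.
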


\begin{proof}
    Let $\varepsilon_t = \varepsilon_t^{BMC}$ and observe that:
    \begin{align}
    \label{eqn:epsilon_decay}
        &\varepsilon_{t+1} \nonumber \\
        &= \frac{\alpha_{t+1}}{\alpha_{t+1} + \beta_{t+1}} = \frac{m_t r_t}{r_t} = m_t \nonumber \\
        &= \frac{\alpha_t}{\alpha_t + \beta_t + 1} \frac{e_t^U(\alpha_t+1) + e_t^Q \beta_t}{e_t^U\alpha_t + e_t^Q \beta_t} \nonumber \\
        &= \frac{m_{t-1} r_{t-1}}{r_{t-1} + 1} \frac{e_t^U (m_{t-1} r_{t-1} + 1) + e_t^Q (1 - m_{t-1}) r_{t-1}}{e_t^U m_{t-1} r_{t-1} + e_t^Q (1 - m_{t-1}) r_{t-1}} \nonumber \\
        &= \varepsilon_t \frac{\left(e_t^U m_{t-1} + e_t^Q (1 - m_{t-1}) \right) r_{t-1} + e_t^U}{\left(e_t^U m_{t-1} + e_t^Q (1 - m_{t-1})\right)(r_{t-1}+1)},
    \end{align}
    where the first line uses (\ref{eqn:fourth-bmc})-(\ref{eqn:eg-bmc}), the second uses (\ref{eqn:first-bmc}) and the third uses (\ref{eqn:fourth-bmc}) and (\ref{eqn:last-bmc}). Then, since $d_t = \tilde{G}_t^{ExpS} = (1 - \varepsilon_t) \tilde{G}_t^Q + \varepsilon_t \tilde{G}_t^U$:
    \begin{align*}
        (d_t - \tilde{G}_t^U)^2 &= (1 - \varepsilon_t)^2 (\tilde{G}_t^Q - \tilde{G}_t^U)^2, \\
        (d_t - \tilde{G}_t^Q)^2 &= \varepsilon_t^2 (\tilde{G}_t^Q - \tilde{G}_t^U)^2.
    \end{align*}
    Now, if $\varepsilon_t \leq \frac{1}{2}$, then this implies that
    \begin{align*}
        &\phantom{=} (d_t - \tilde{G}_t^U)^2 \geq (d_t - \tilde{G}_t^Q)^2 \implies e_t^U \leq e_t^Q \\
        &\implies e_t^U \leq e_t^U m_{t-1} + e_t^Q (1 - m_{t-1}),
    \end{align*} 
    and from (\ref{eqn:epsilon_decay}) we conclude that $\varepsilon_{t+1} \leq \varepsilon_t$. The first statement of the theorem follows from the assumption $\varepsilon_0 \leq \frac{1}{2}$ and a standard induction argument. The second statement follows from the monotone convergence theorem (see, e.g. \cite{rudin1976principles}, pg. 56).
\end{proof}

The convergence of $\varepsilon$-\texttt{BMC} holds using any value function representation, including neural networks. It is important to note that convergence of $\varepsilon$-\texttt{BMC} can only be guaranteed when $\varepsilon$ is initialized in $[0, 0.5]$. However, this is not a concern in practice, since it has been found that there is no significant gain in using values of $\varepsilon$ larger than 0.5 \citep{dos2017adaptive}.

\section{EMPIRICAL EVALUATION}
\label{sec:numerical}

To demonstrate the ability of Algorithm~\ref{alg:bmc} to adapt in a variety of environments, we consider a deterministic, finite state grid-world domain, the continuous state cart-pole control problem, and a stochastic, discrete state supply-chain problem. The third domain was chosen to show how our algorithm performs when the action space is large and the problem is stochastic. We considered two different reinforcement learning algorithms: on-policy tabular expected SARSA \citep{sutton2018reinforcement}, and off-policy DQN with experience replay \citep{mnih2015human} \footnote{As noted in the previous section, we only need to replace line 11 of Algorithm \ref{alg:bmc} with the DQN update.}. The parameter settings are listed in Tables 1 and 2 in the supplementary materials \footnote{The code and supplementary materials can be found at \url{https://github.com/mike-gimelfarb/bayesian-epsilon-greedy}.}. All experiments were run independently 100 times, and mean curves with shaded standard error are reported.

In the empirical evaluation of Algorithm \ref{alg:bmc}, our goal is to quantify the added value of adapting the $\varepsilon$ parameter in epsilon-greedy policies using a Bayesian approach, rather than compare the performance of epsilon-greedy policies against other approaches, which has been investigated in the literature in various settings \citep{vermorel2005multi,tijsma2016comparing}. Therefore, Algorithm \ref{alg:bmc} is compared against different annealing schedules for $\varepsilon_t$, broken down into the following categories: 
\begin{itemize}
    \item {\bf constant:} $\varepsilon_t = c$, where $c \in \lbrace 0.01, 0.05, 0.1, 0.25, 0.5\rbrace$;
    \item {\bf geometric:} $\varepsilon_t = \frac{1}{2} \rho^t$, where $\rho \in \lbrace 0.85, 0.9, 0.95, 0.975, 0.99\rbrace$ and $t$ is the episode number;
    \item {\bf power:} $\varepsilon_t = \frac{1}{2}(t + 1)^{-\beta}$, where $\beta \in \lbrace 0.25, 0.5, 1.0, 1.5\rbrace$ and $t$ is the episode number;
    \item {\bf adaptive:} VDBE \citep{tokic2010adaptive} with $\varepsilon_0= 0.5$, $\delta = 1 / |\mathcal{A}|$, and $\sigma \in \lbrace 0.01, 0.05, 0.1, 0.5, 1.0, 10.0, 100.0\rbrace$.
\end{itemize}

We do not compare to \cite{tokic2011value}, since that paper falls outside the scope of epsilon-greedy policies. However, we reiterate that it is a trivial matter to interchange VDBE and $\varepsilon$-\texttt{BMC} in that framework.

\subsection{GRID-WORLD}
\label{subsec:gridworld}

The first benchmark problem is the discrete deterministic 5-by-5 grid-world navigation problem with sub-goals presented in \cite{ng1999policy}. Valid moves incur a cost of 0.1, and invalid moves incur a cost of 0.2, in order to encourage the agent to solve the task in the least amount of time. We set $\gamma = 0.99$. Testing consists of running a single episode, starting from the same initial state, using the greedy policy at the end of each episode. The results are shown in Figures \ref{fig:gridworld_sarsa} and \ref{fig:gridworld_dqn}.

\begin{figure*}[t!]
    \centering
    \includegraphics[width=0.33\linewidth]{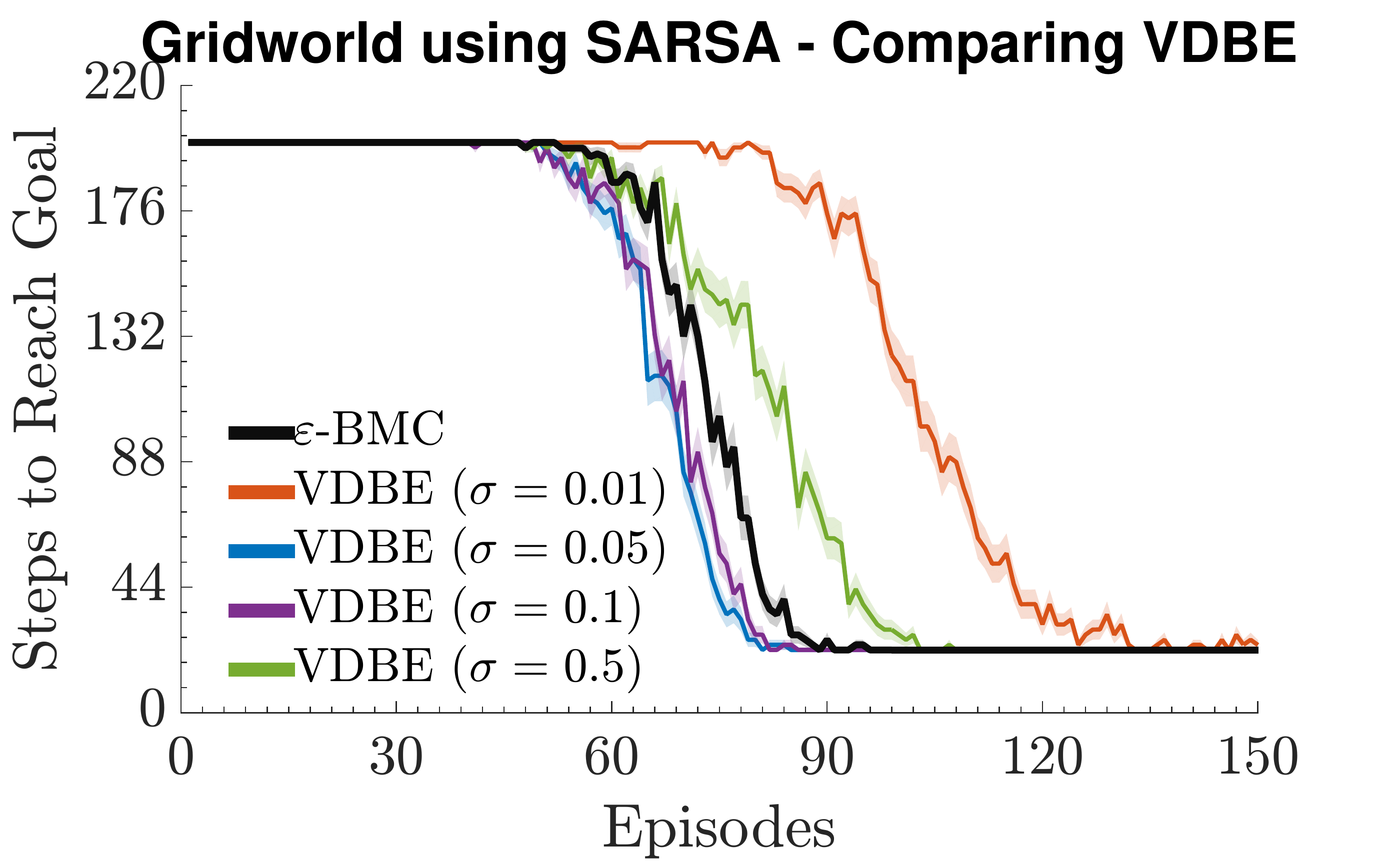}
    \includegraphics[width=0.33\linewidth]{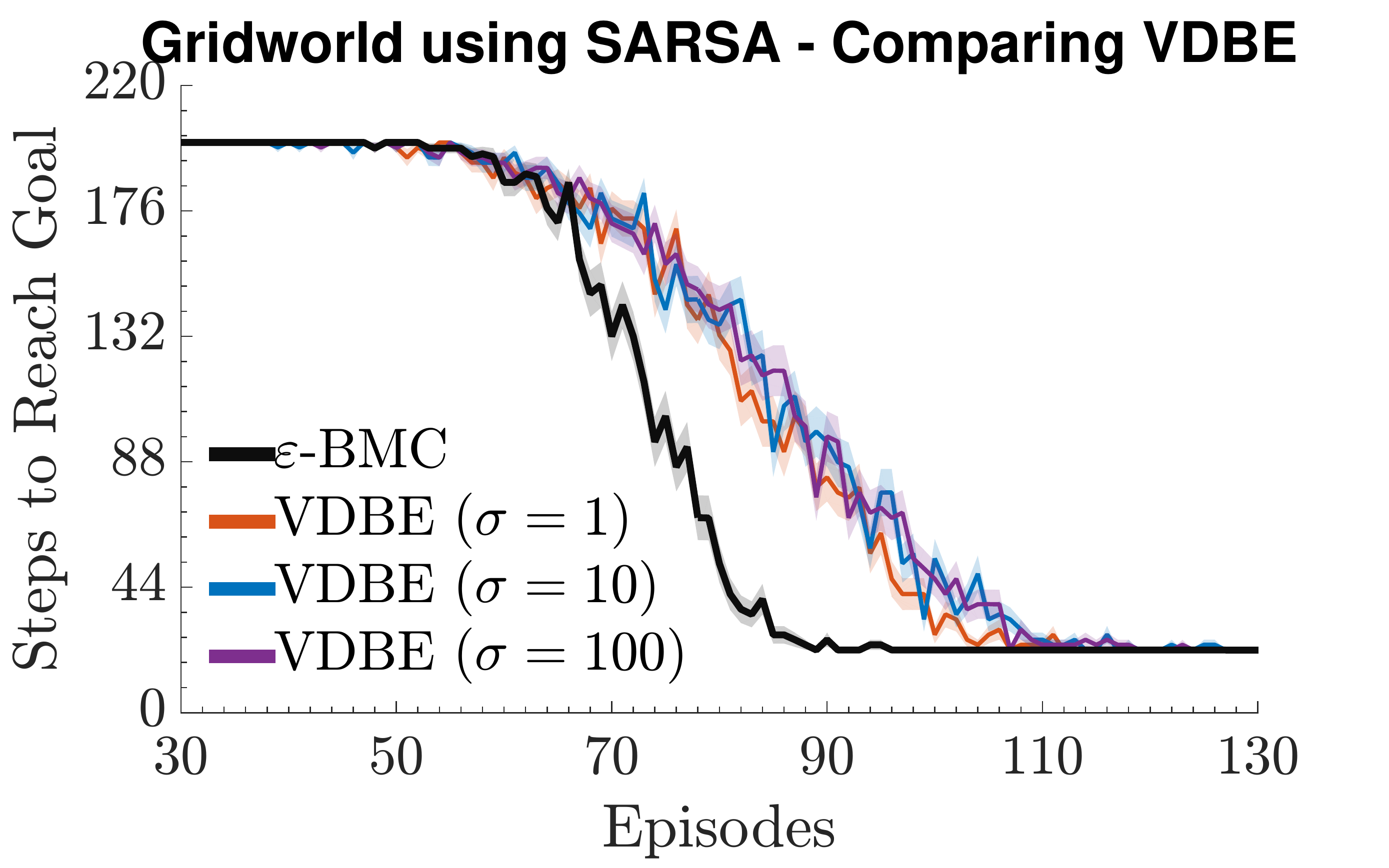}
    \includegraphics[width=0.33\linewidth]{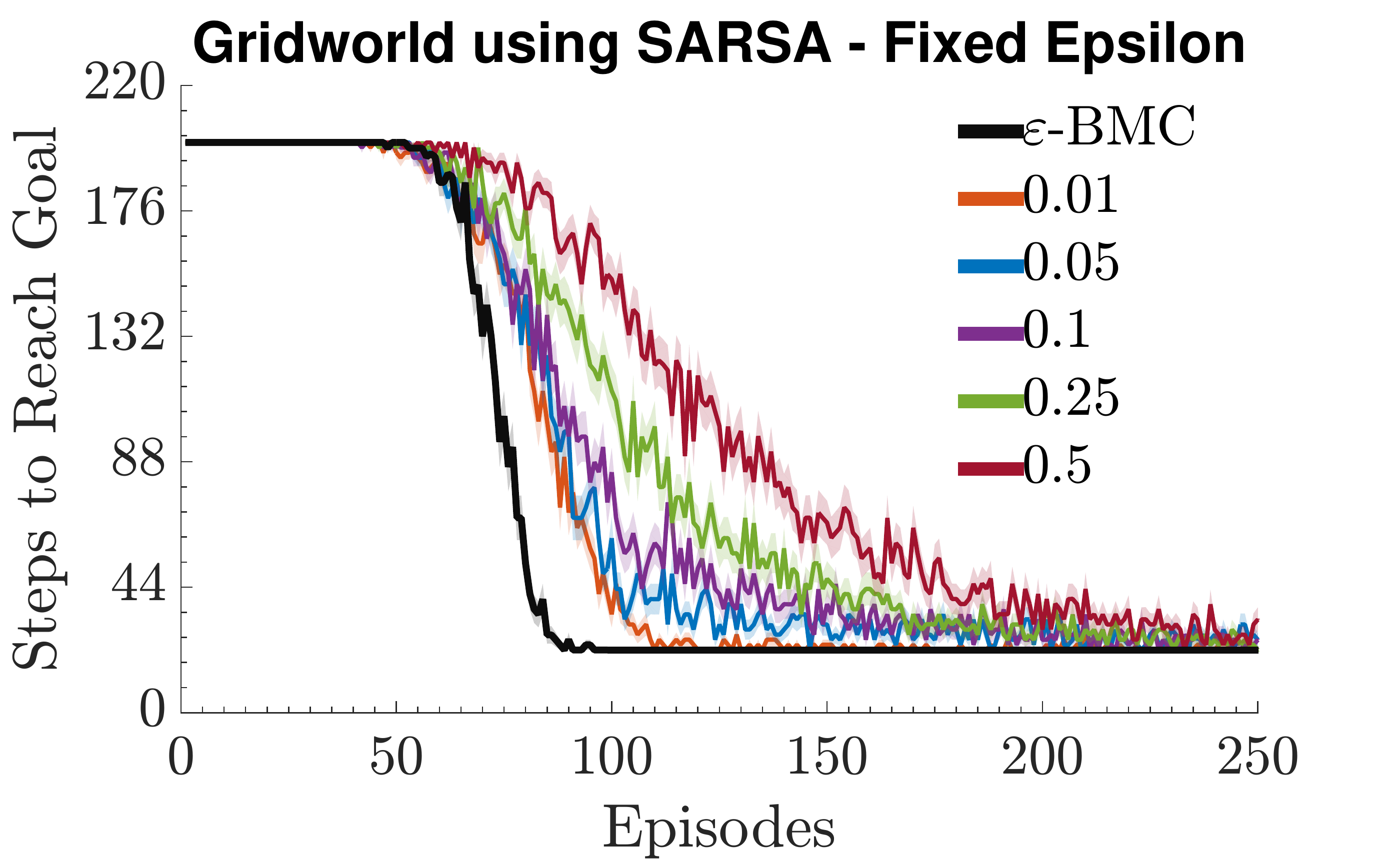} 
    \includegraphics[width=0.33\linewidth]{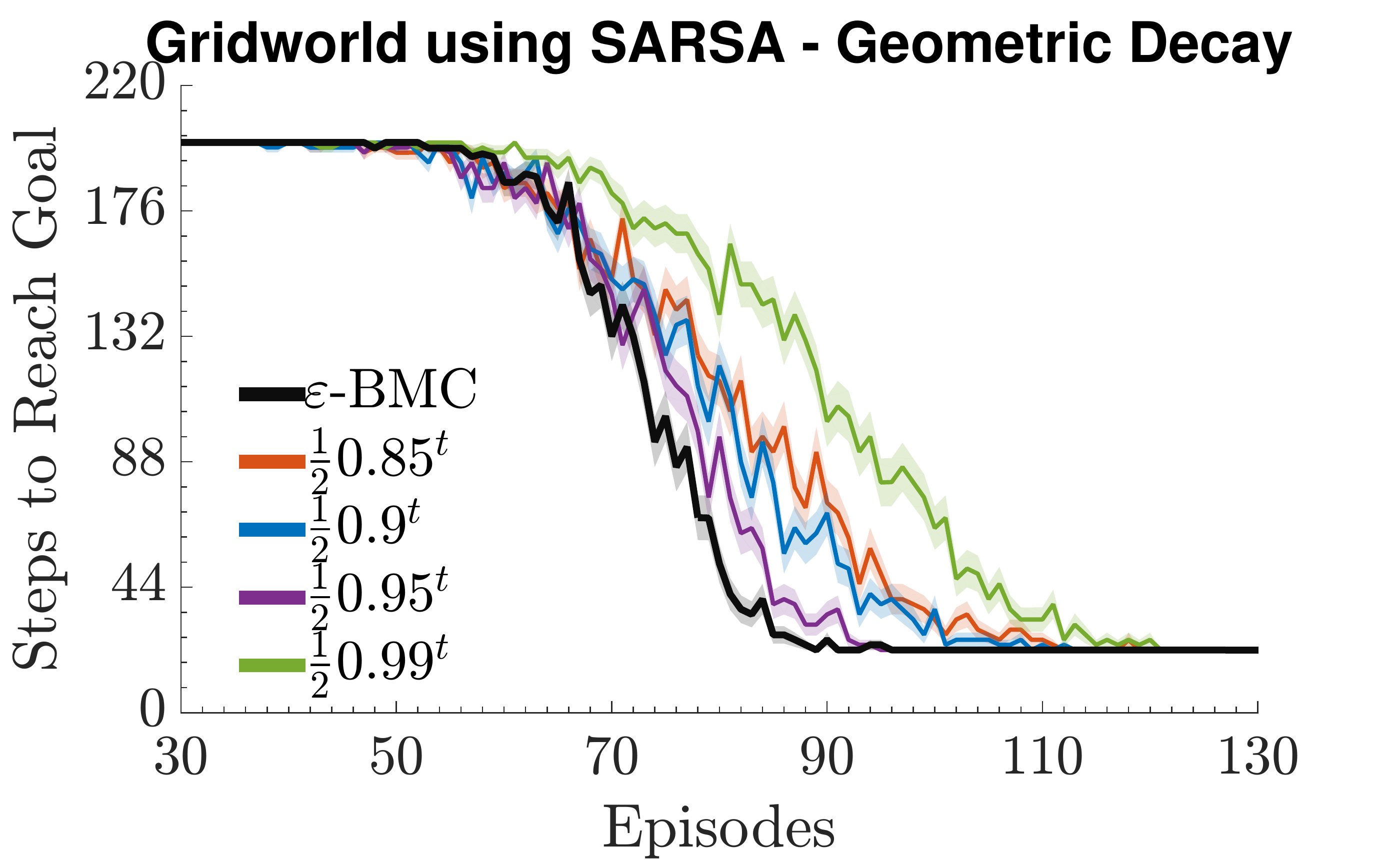}
    \includegraphics[width=0.33\linewidth]{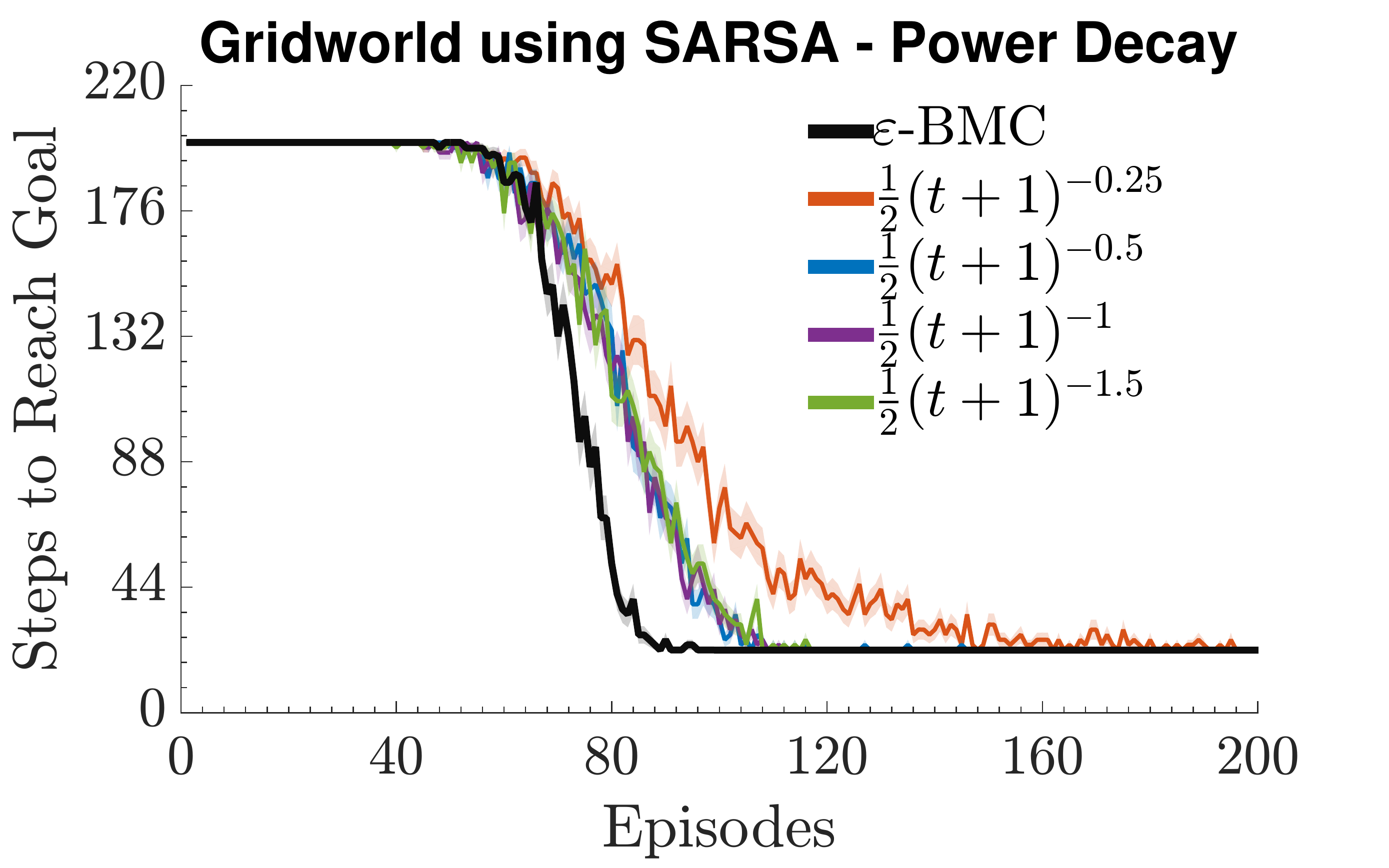} 
    \includegraphics[width=0.33\linewidth]{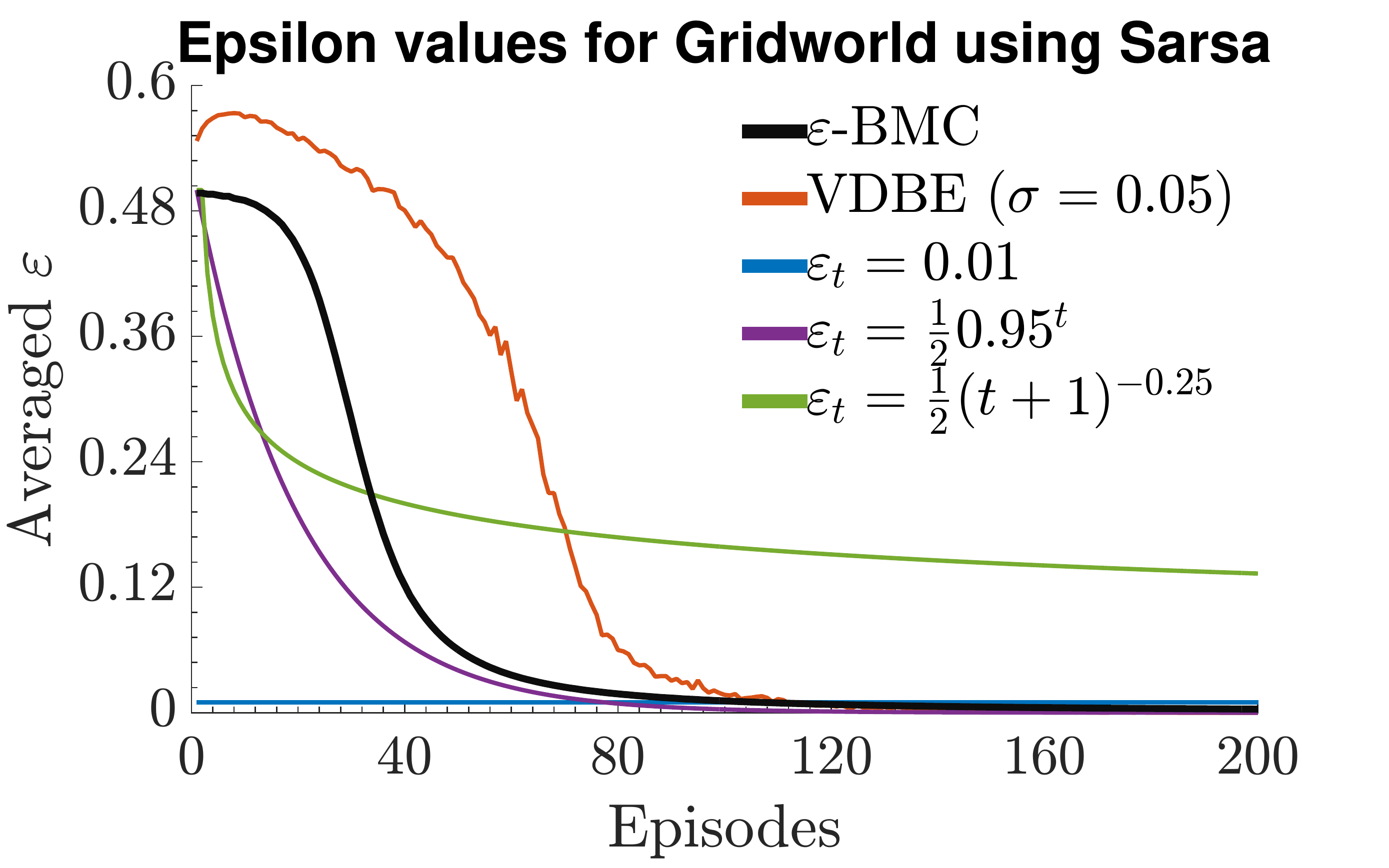}
    \caption{Average performance (steps to reach the final goal) on the grid-world domain using expected SARSA.}
\label{fig:gridworld_sarsa}
\end{figure*}

\begin{figure*}[t!]
    \centering
    \includegraphics[width=0.33\linewidth]{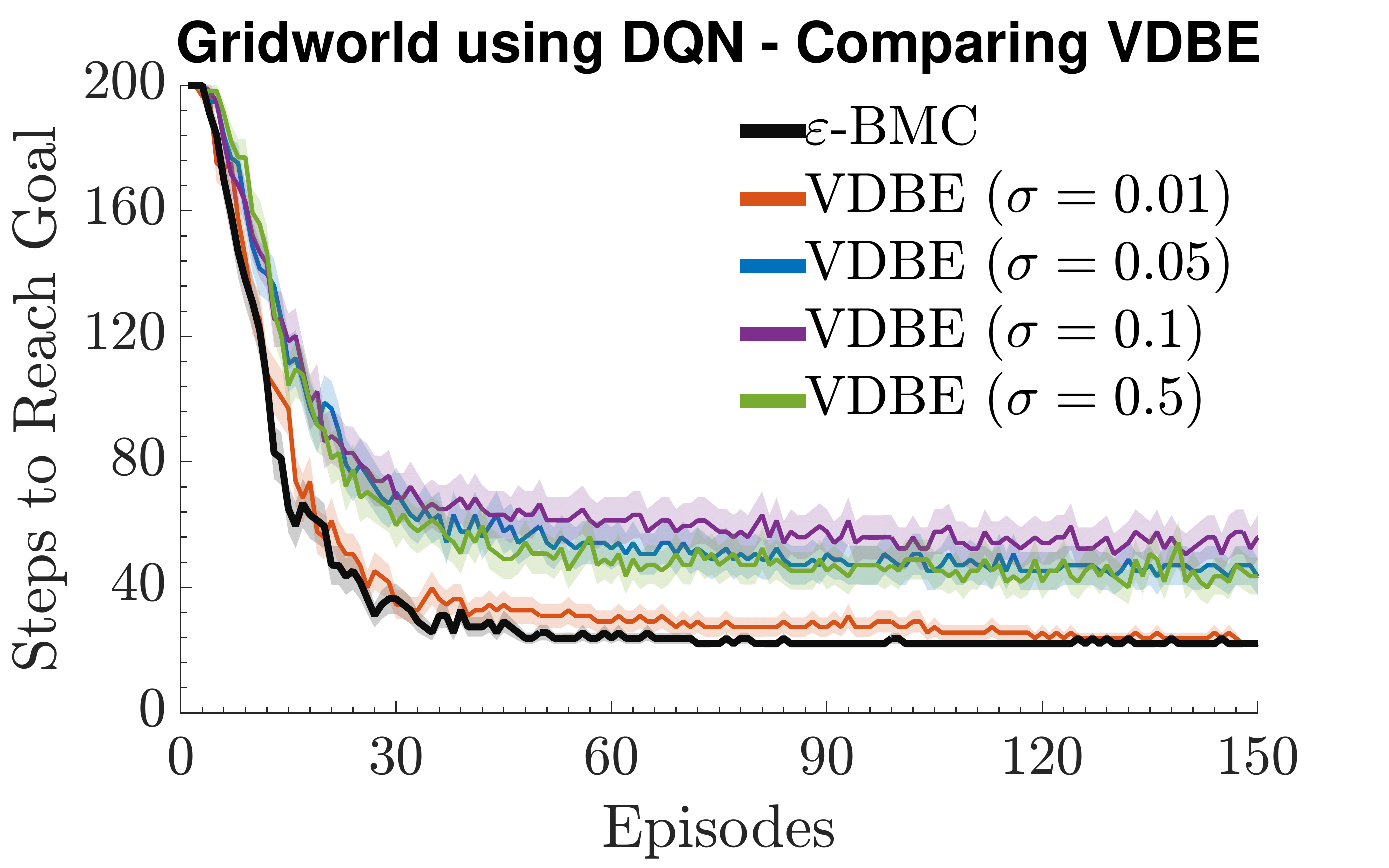}
    \includegraphics[width=0.33\linewidth]{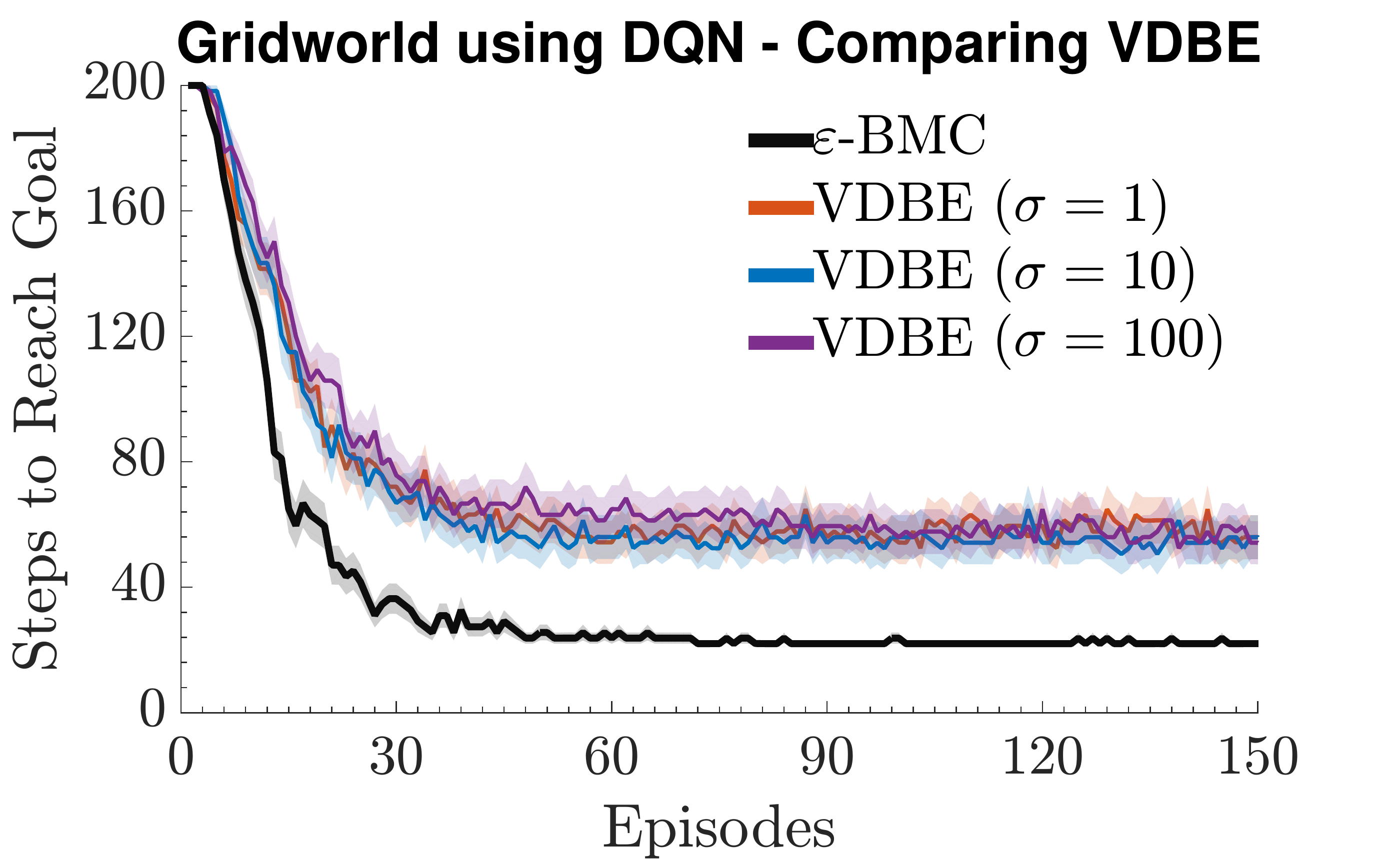}
    \includegraphics[width=0.33\linewidth]{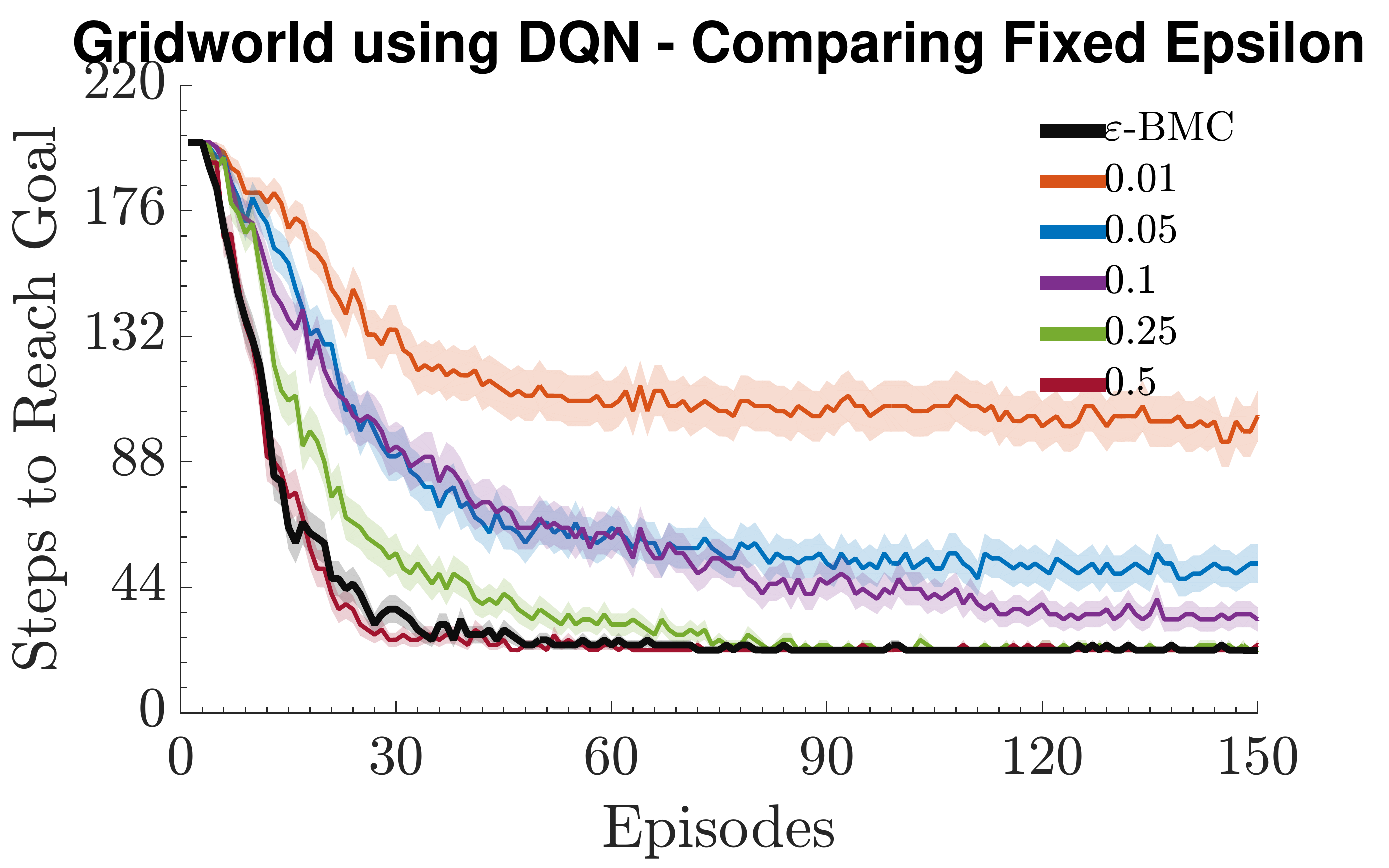} 
    \includegraphics[width=0.33\linewidth]{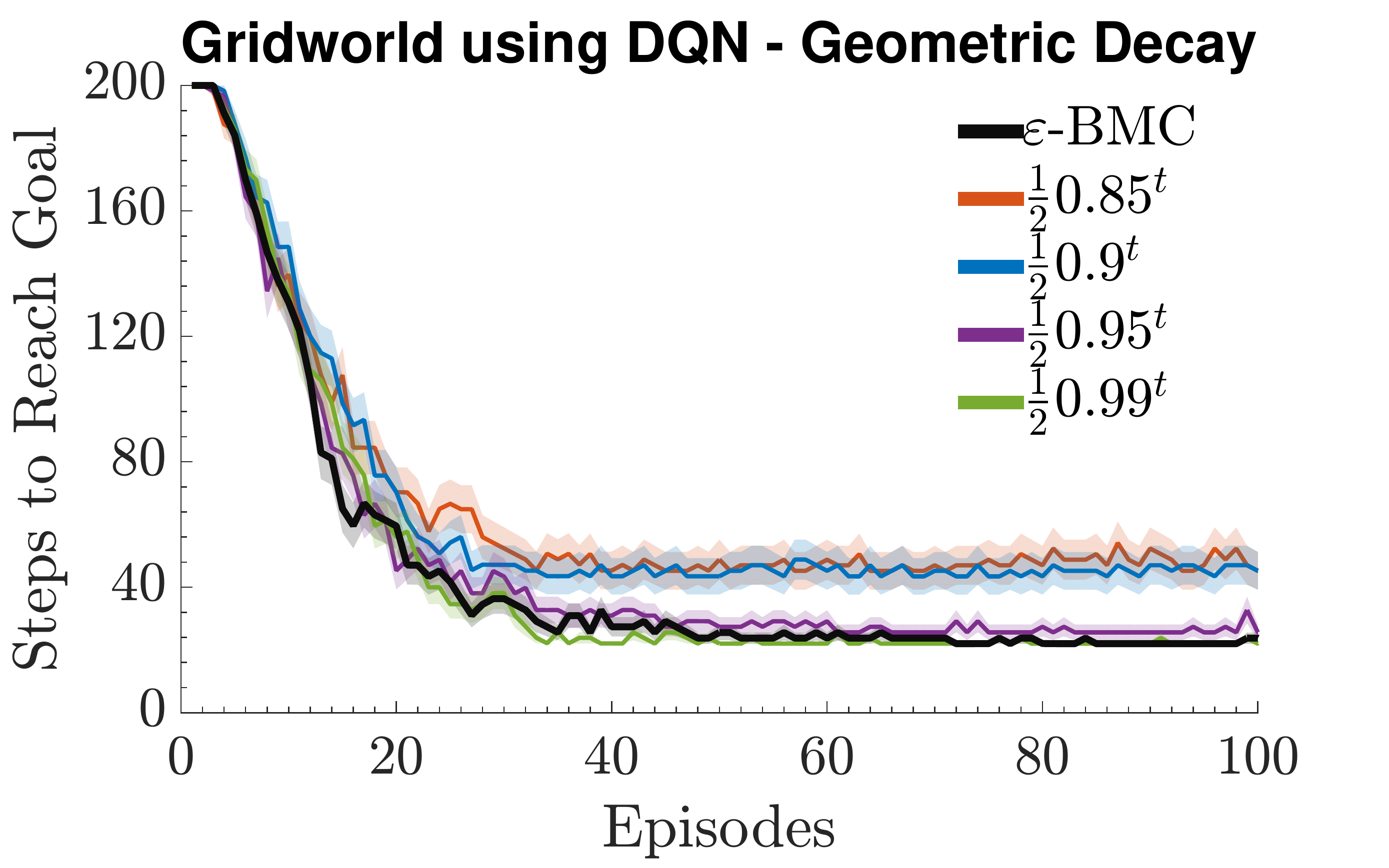}
    \includegraphics[width=0.33\linewidth]{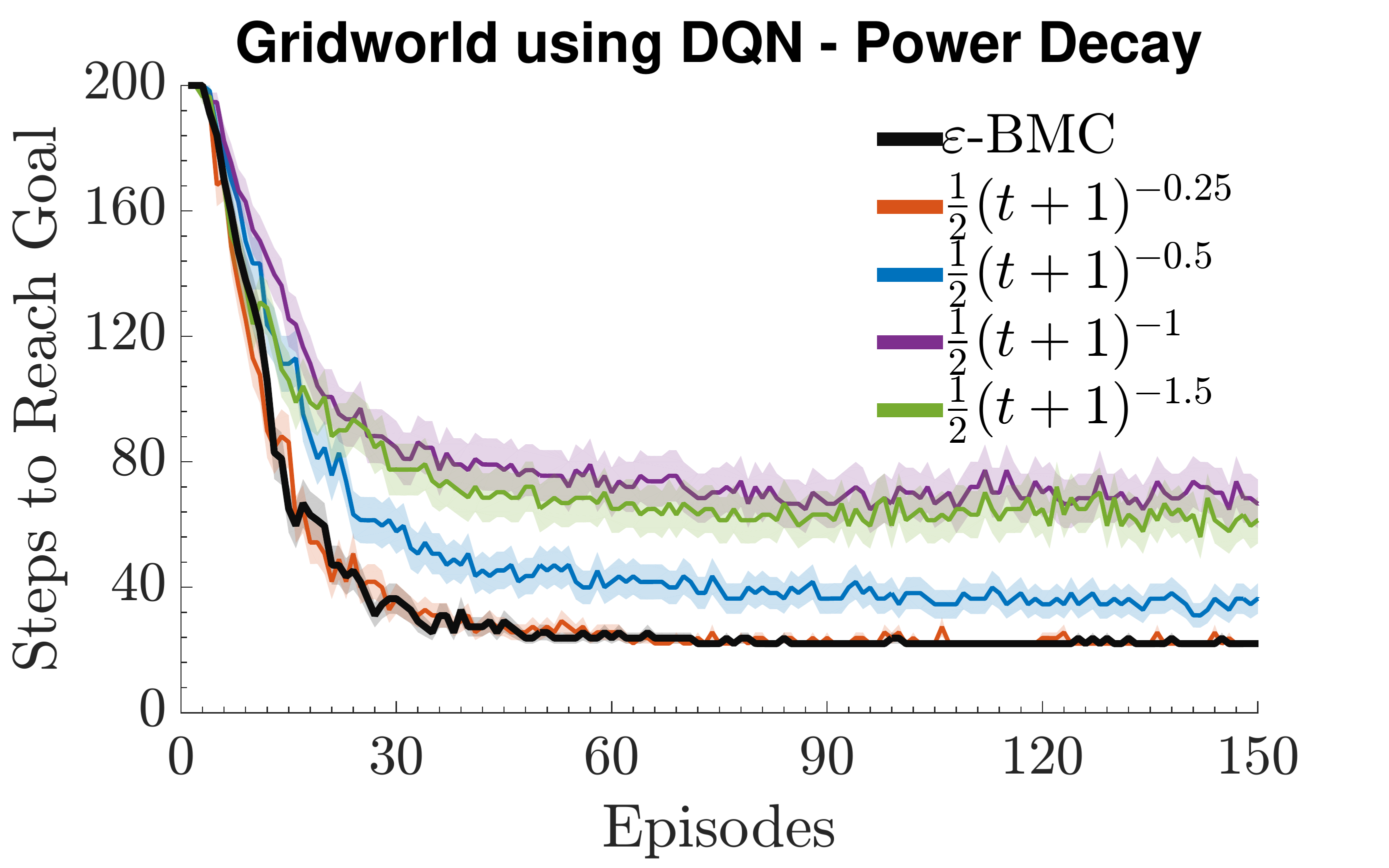} 
    \includegraphics[width=0.33\linewidth]{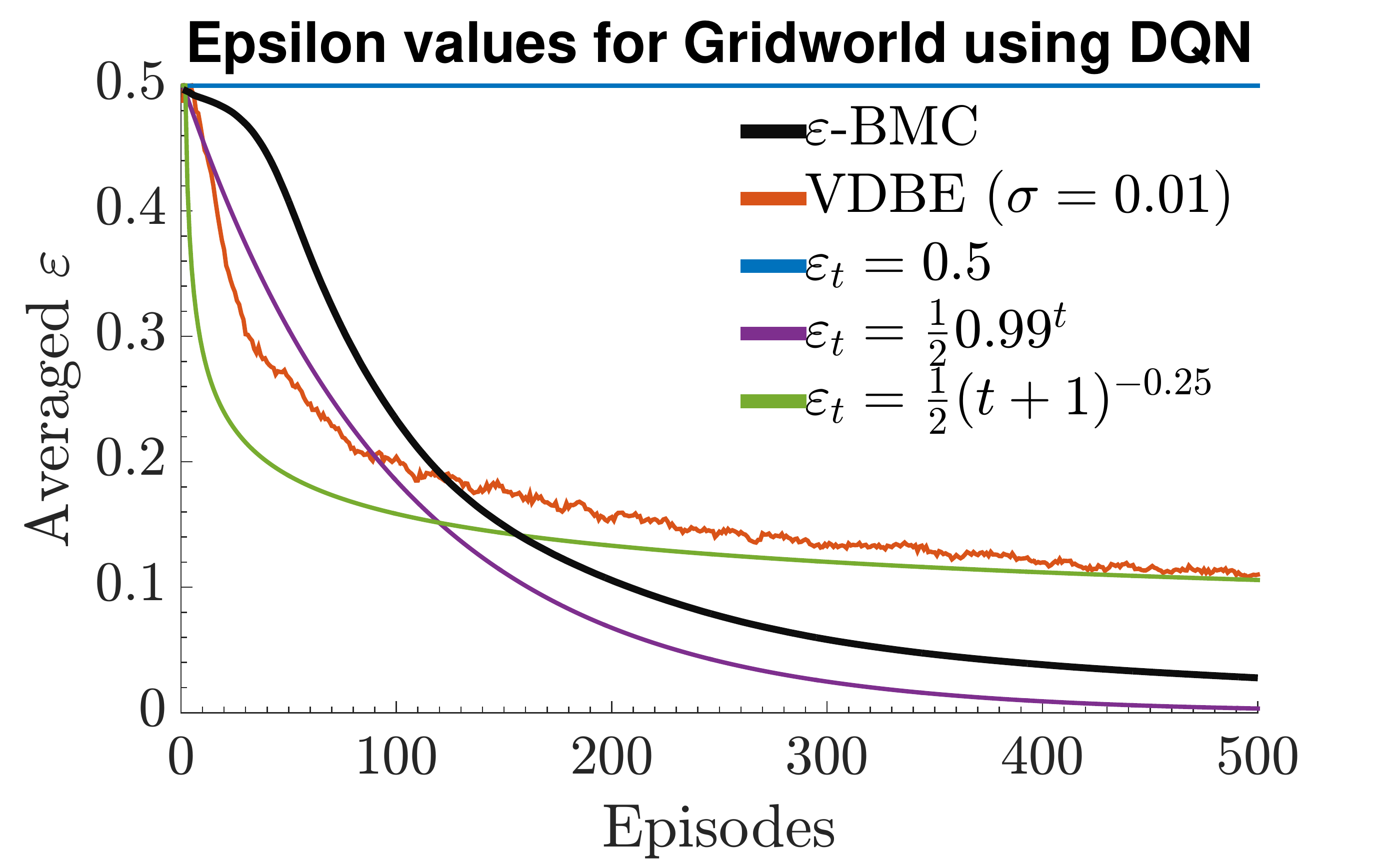} 
    \caption{Average performance (steps to reach the final goal) on the grid-world domain using deep Q learning.}
\label{fig:gridworld_dqn}
\end{figure*}

\subsection{CART-POLE}
\label{subsec:cartpole}

The second problem is the continuous deterministic cart-pole control problem. A reward of 1.0 is provided until the pole falls, to encourage the agent to keep the pole upright. We also set $\gamma = 0.95$. To run the tabular expected SARSA algorithm and VDBE, we discretize the four-dimensional state space into $3\times 3\times 4\times 3 = 108$ equal regions. Since the initial position is randomized, testing consists of evaluating the greedy policy on 10 independent episodes and averaging the returns. Since over-fitting was a significant concern for DQN, we stop training as soon as perfect test performance (the pole has not been dropped) was observed over four consecutive episodes. The results are shown in Figures \ref{fig:cartpole_sarsa} and \ref{fig:cartpole_dqn}.

\begin{figure*}[t!]
    \centering
    \includegraphics[width=0.33\linewidth]{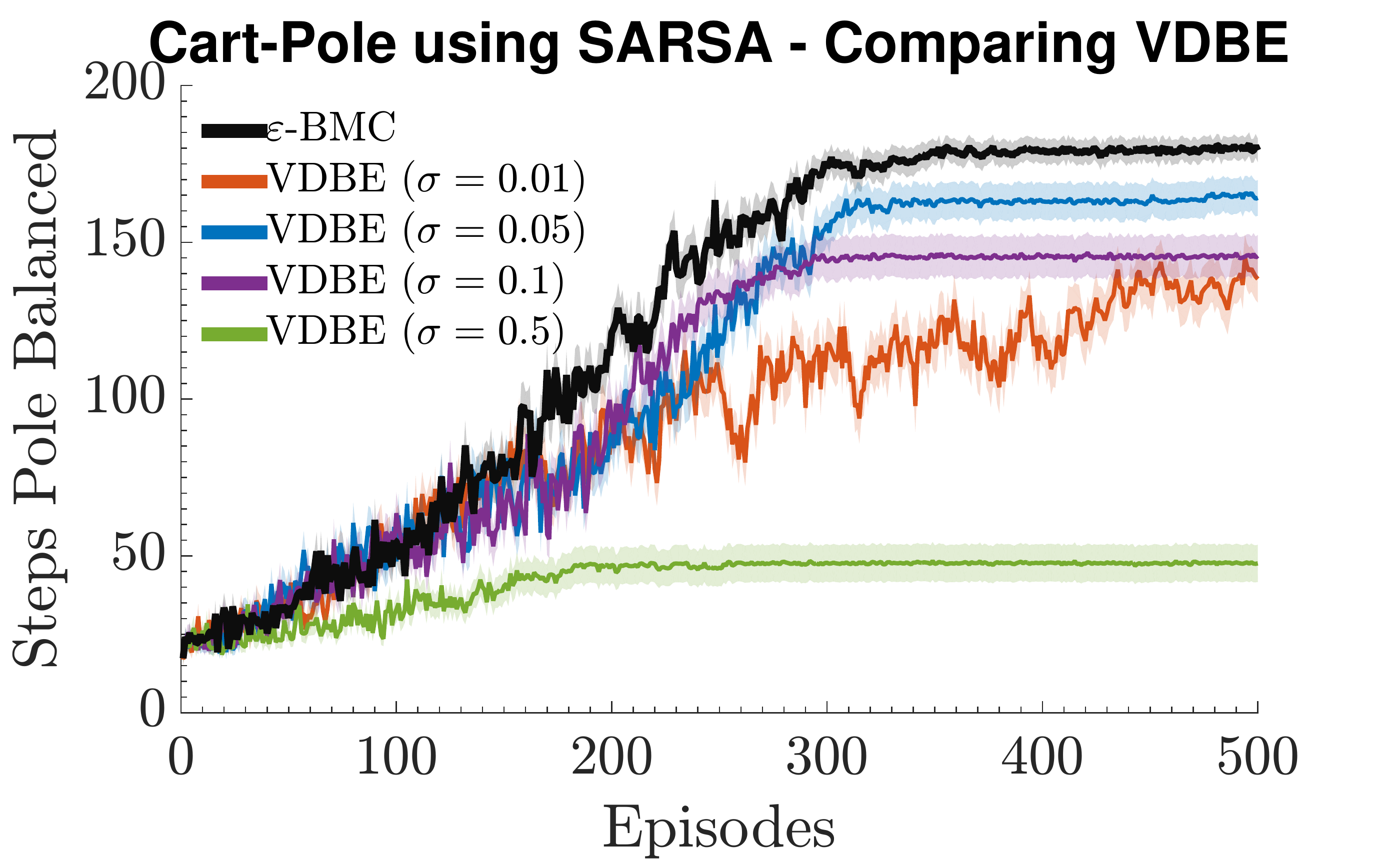}
    \includegraphics[width=0.33\linewidth]{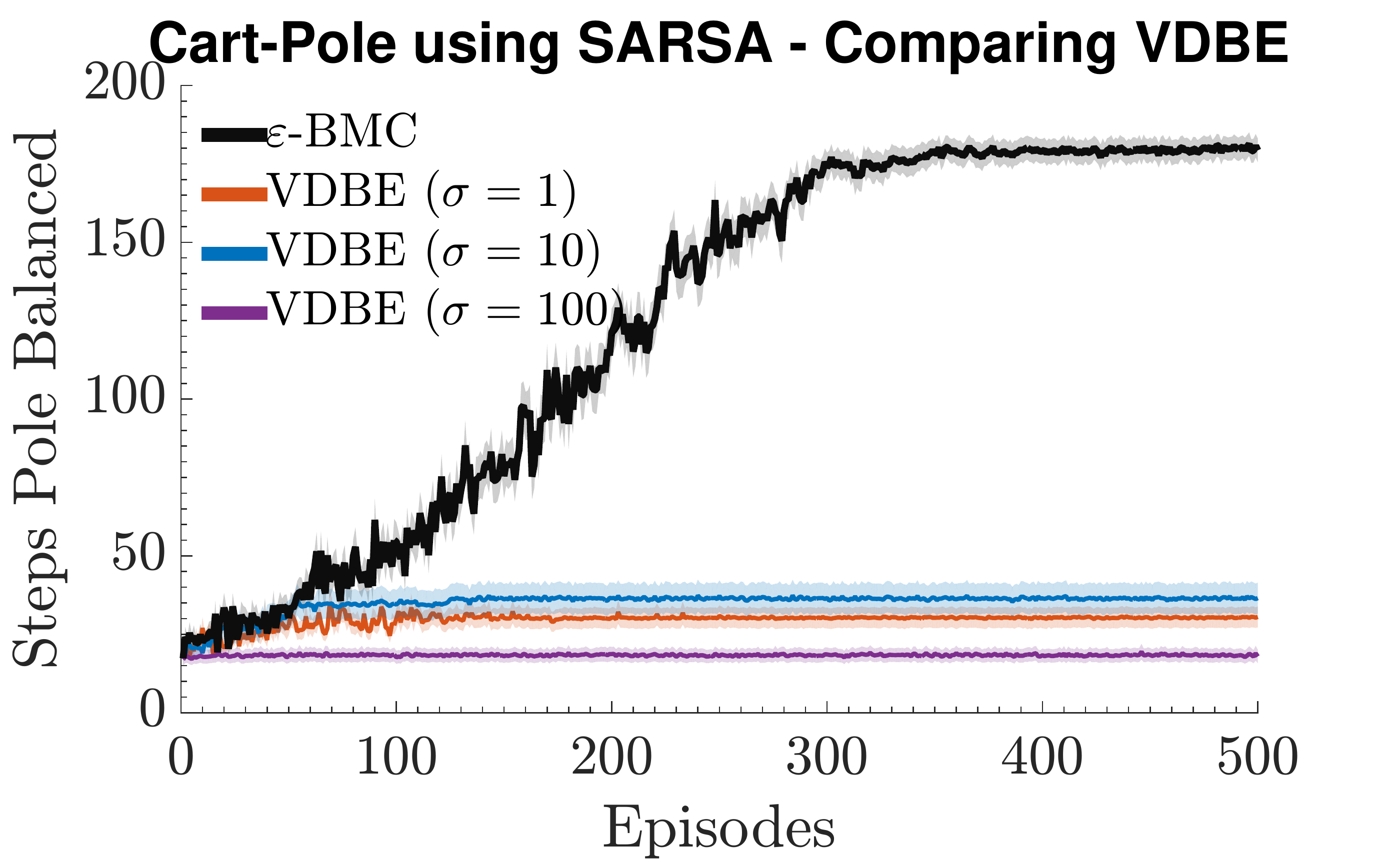}
    \includegraphics[width=0.33\linewidth]{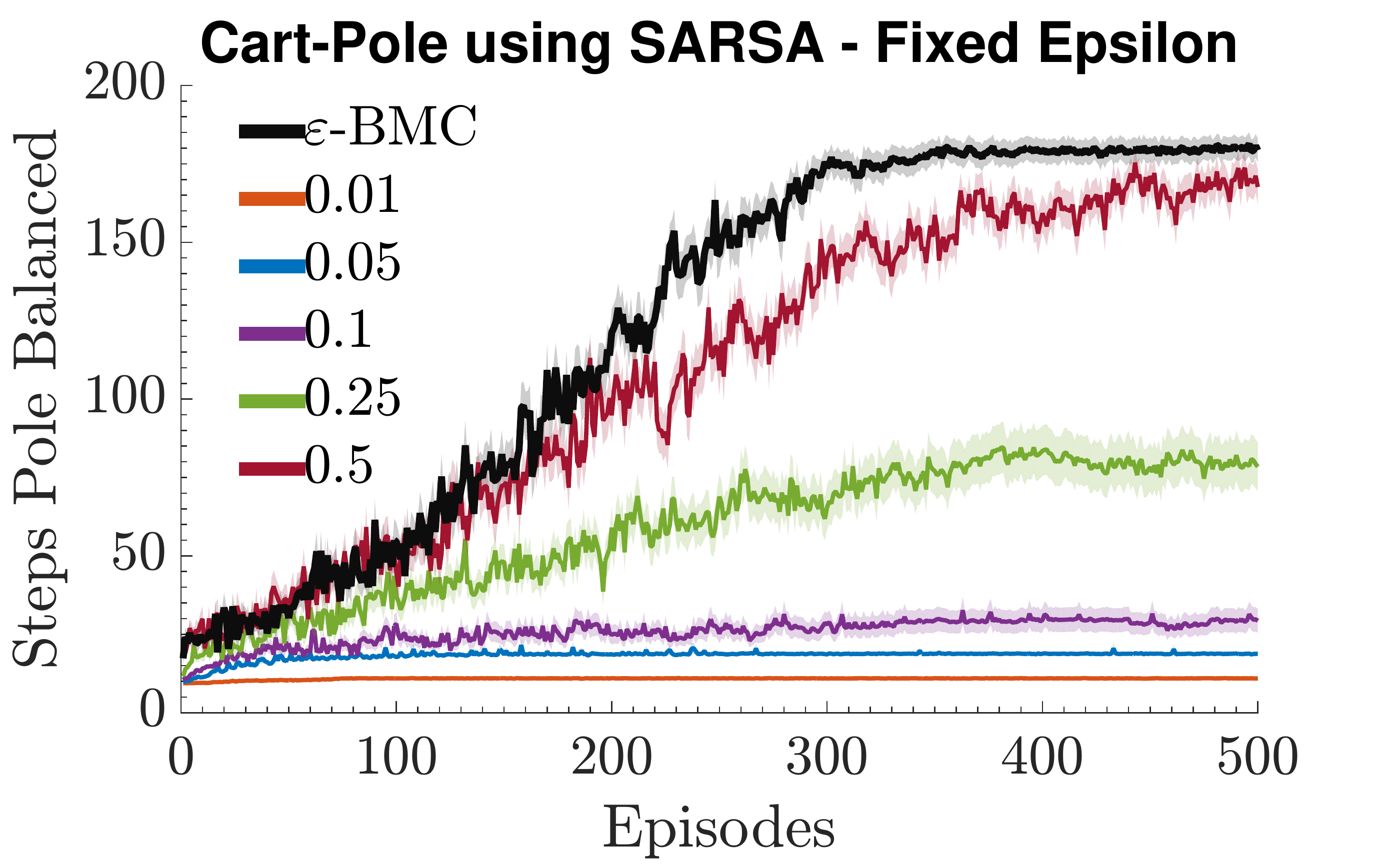} 
    \includegraphics[width=0.33\linewidth]{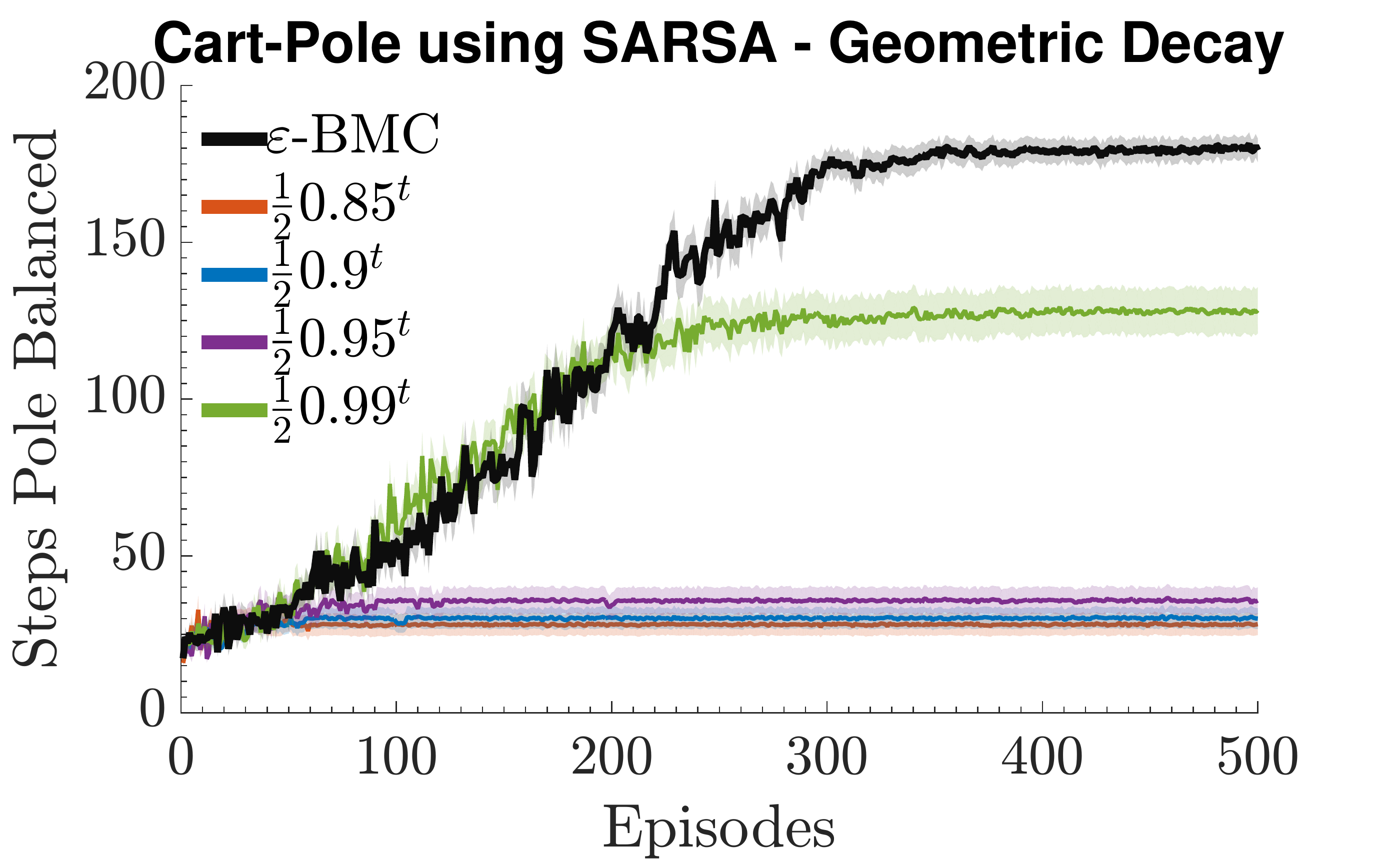}
    \includegraphics[width=0.33\linewidth]{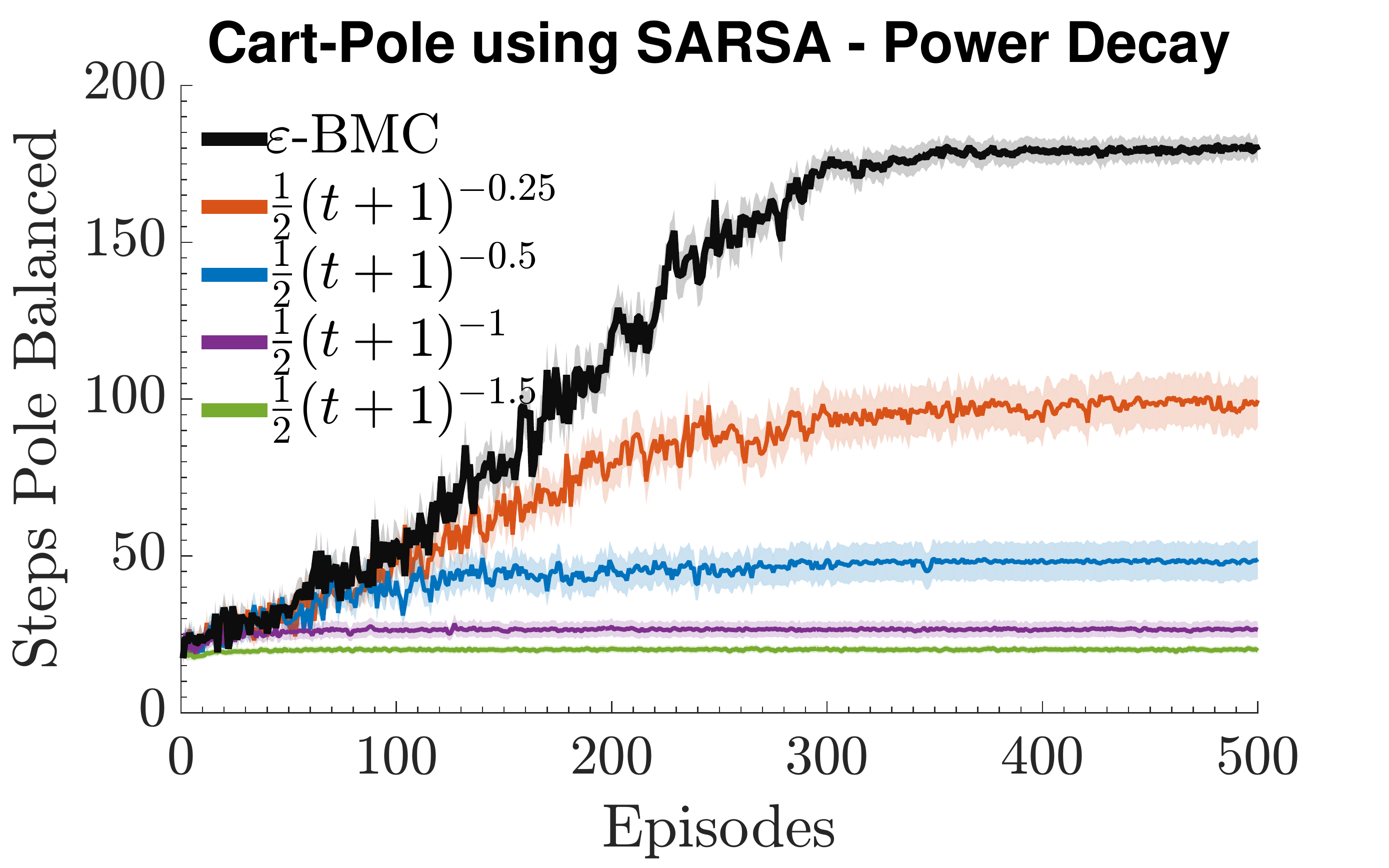} 
    \includegraphics[width=0.33\linewidth]{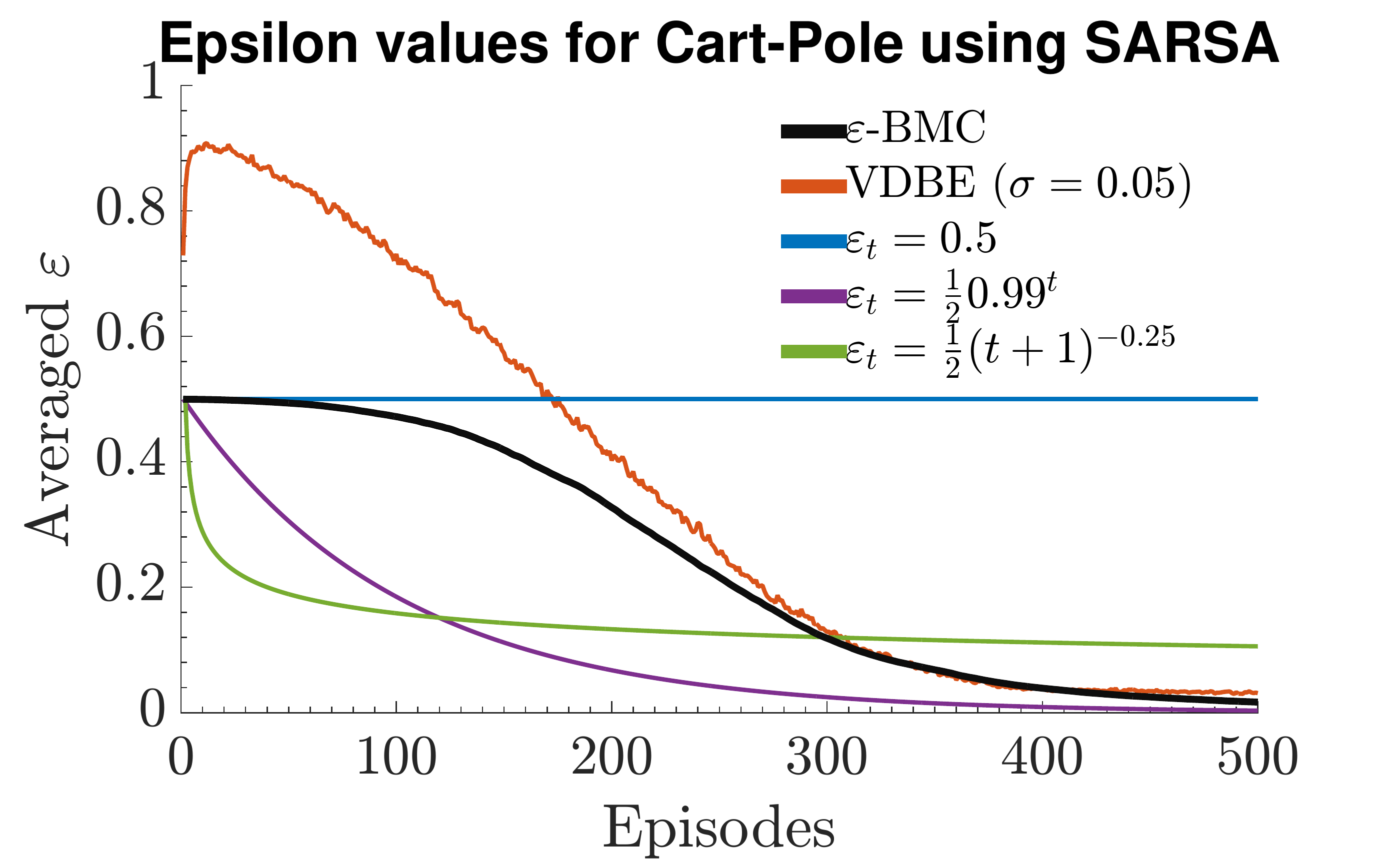} 
    \caption{Average performance (time steps the pole is balanced) on the cart-pole domain using expected SARSA.}
\label{fig:cartpole_sarsa}
\end{figure*}

\begin{figure*}[t!]
    \centering
    \includegraphics[width=0.33\linewidth]{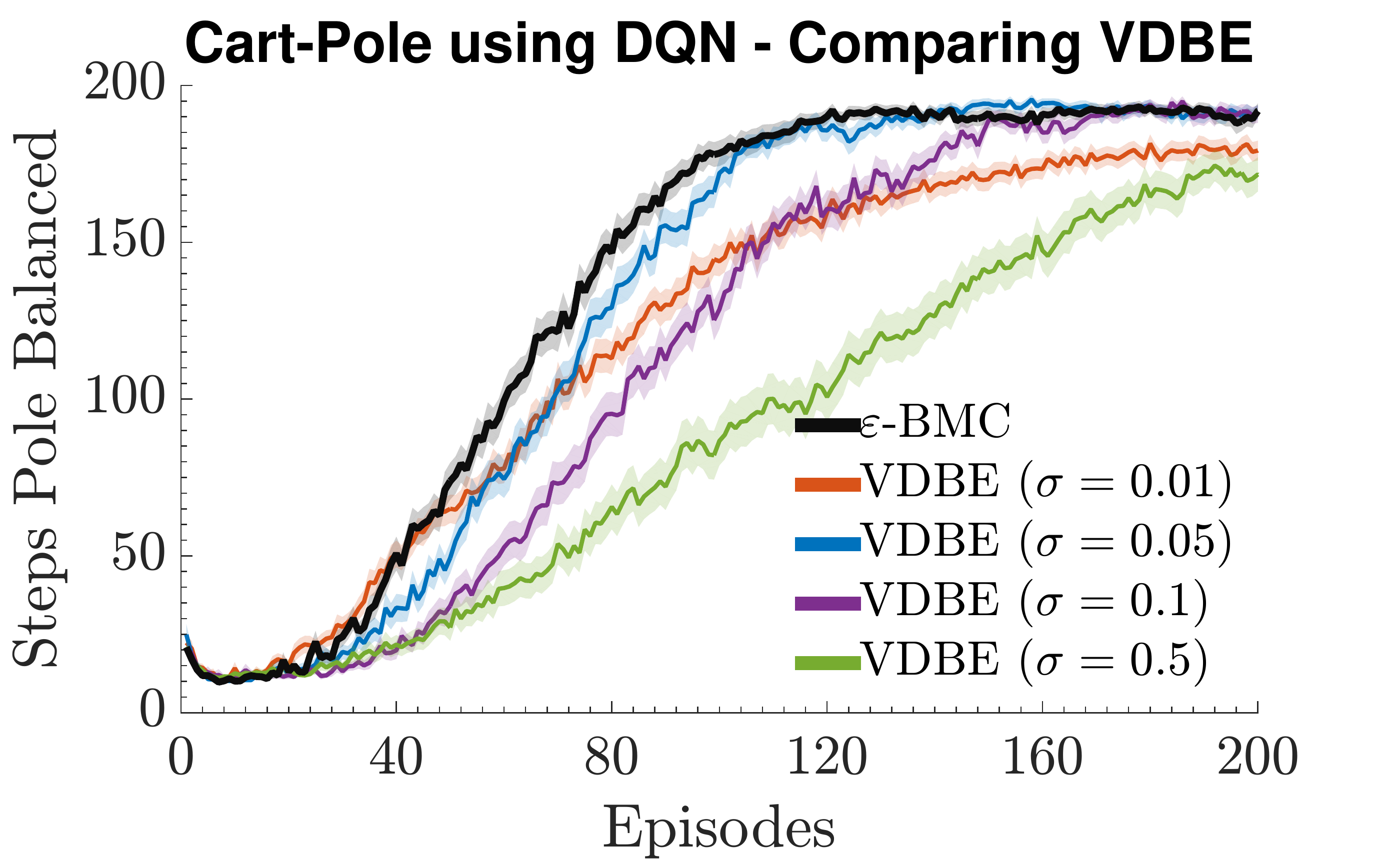}
    \includegraphics[width=0.33\linewidth]{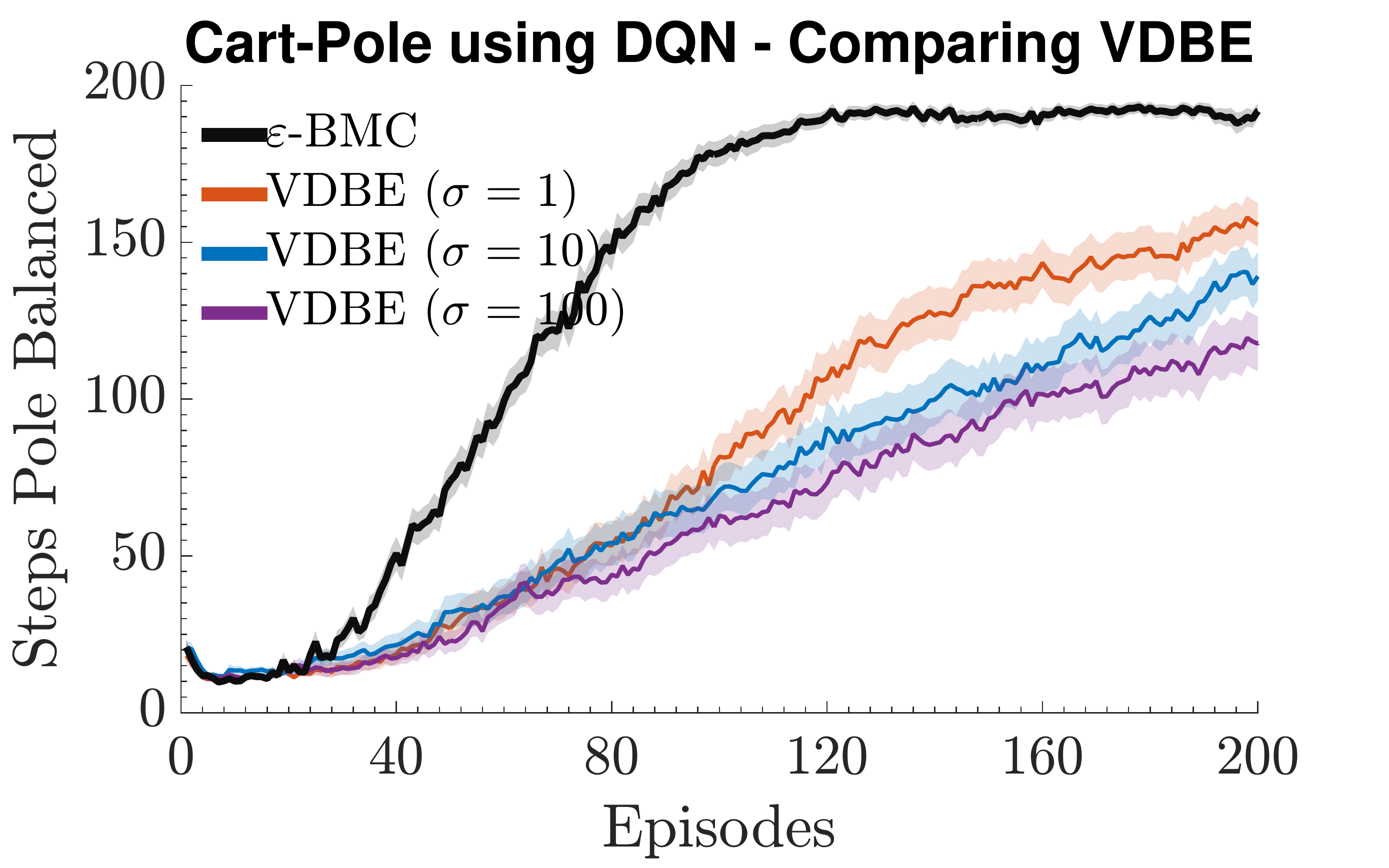}
    \includegraphics[width=0.33\linewidth]{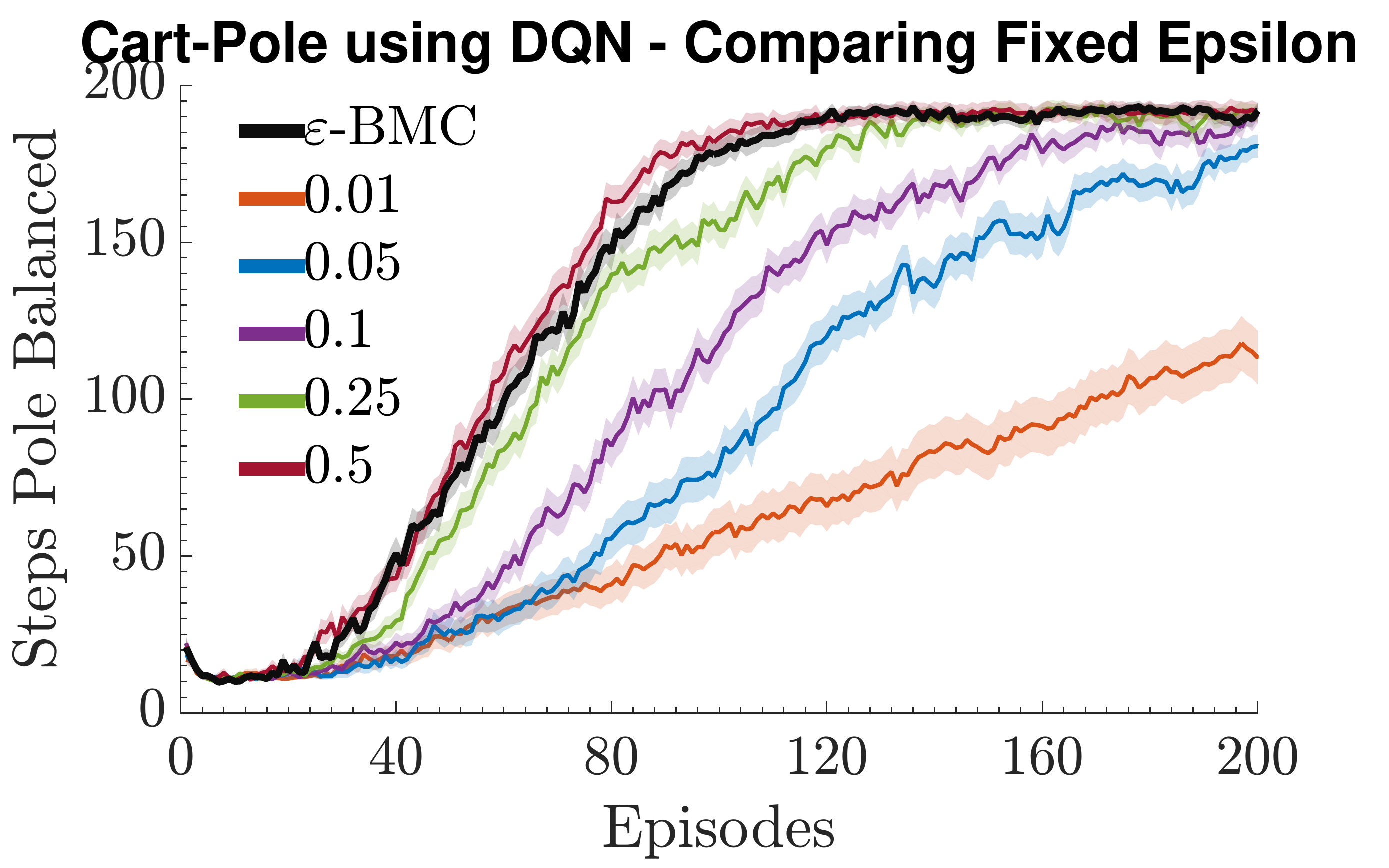} 
    \includegraphics[width=0.33\linewidth]{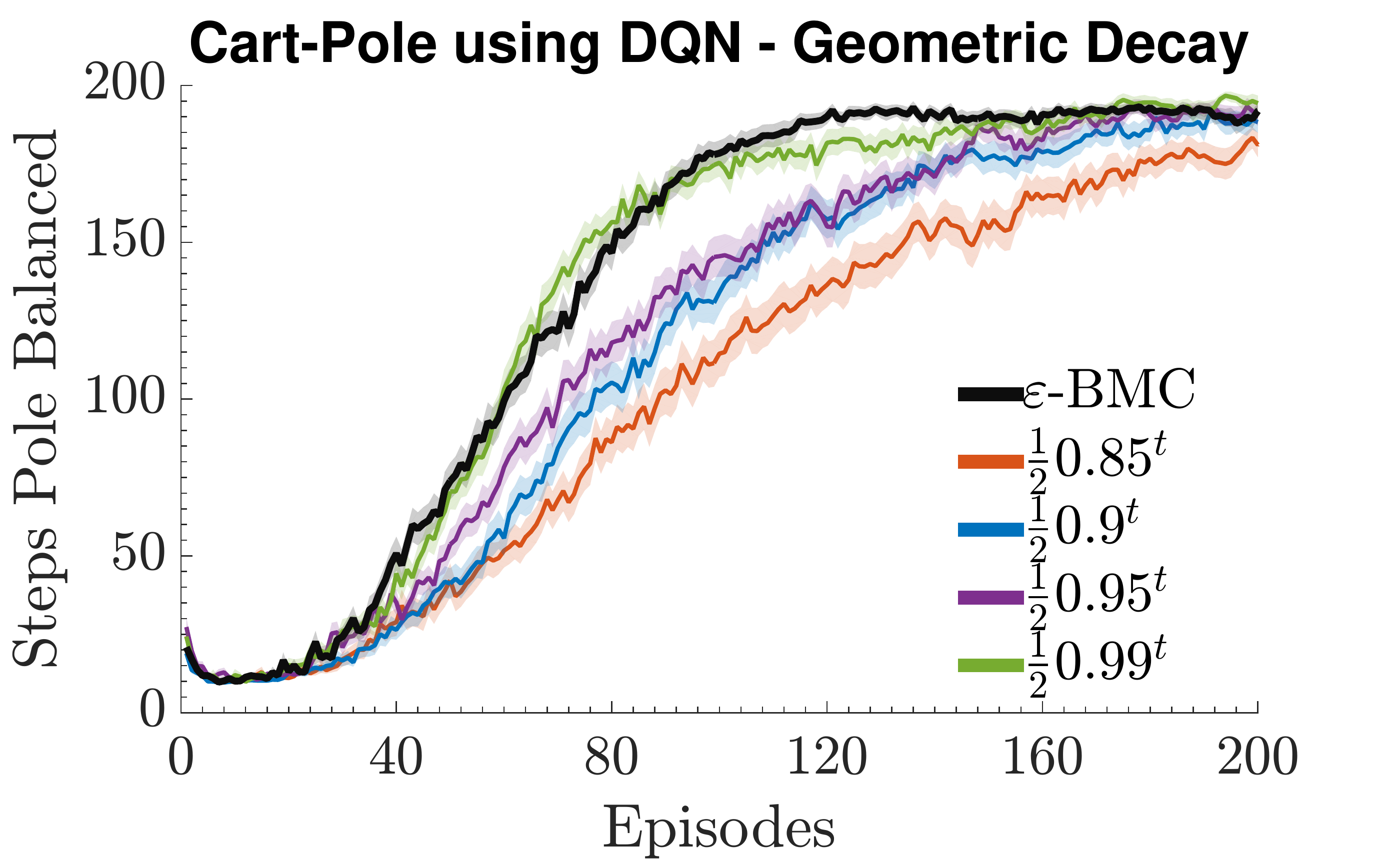}
    \includegraphics[width=0.33\linewidth]{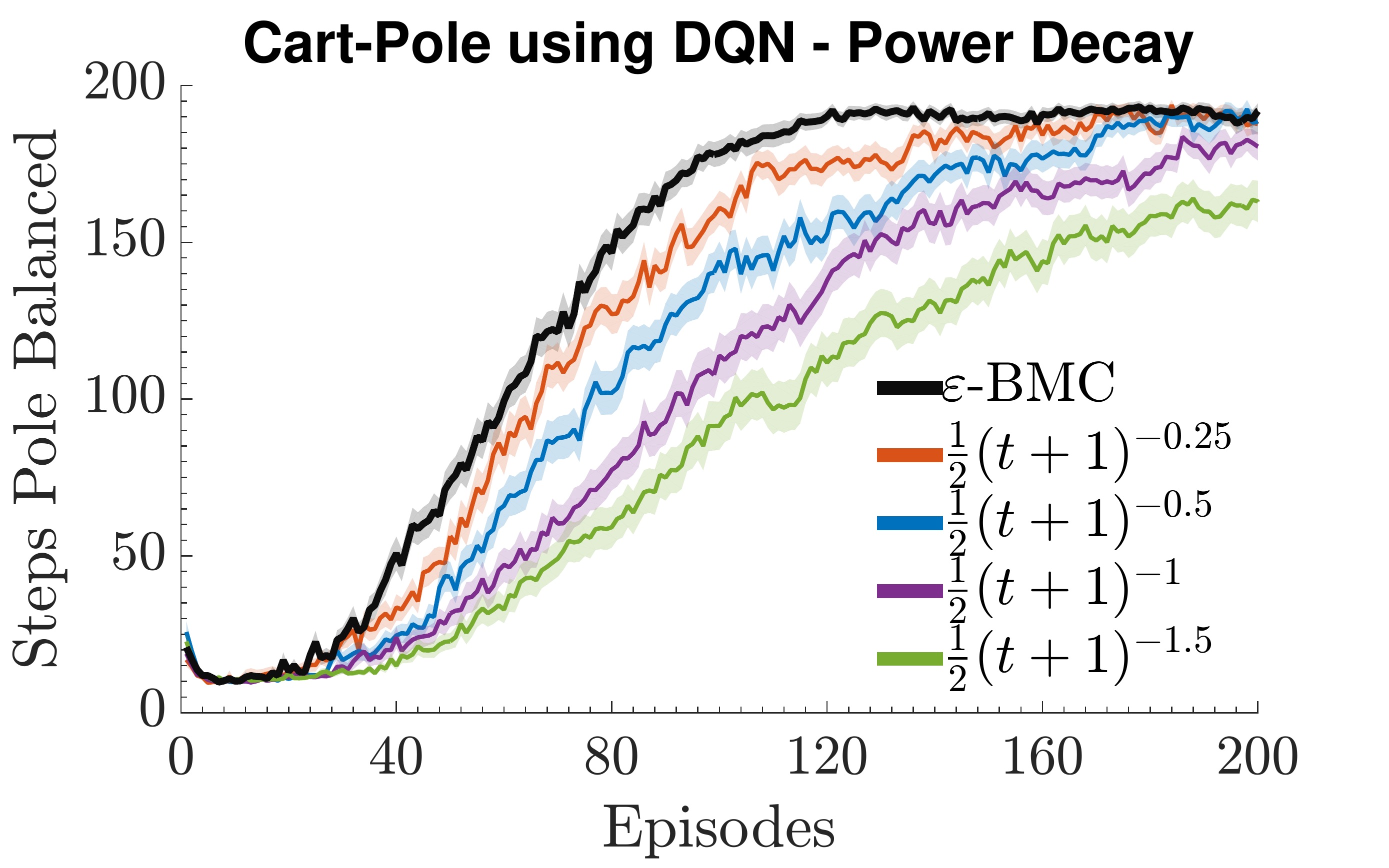} 
    \includegraphics[width=0.33\linewidth]{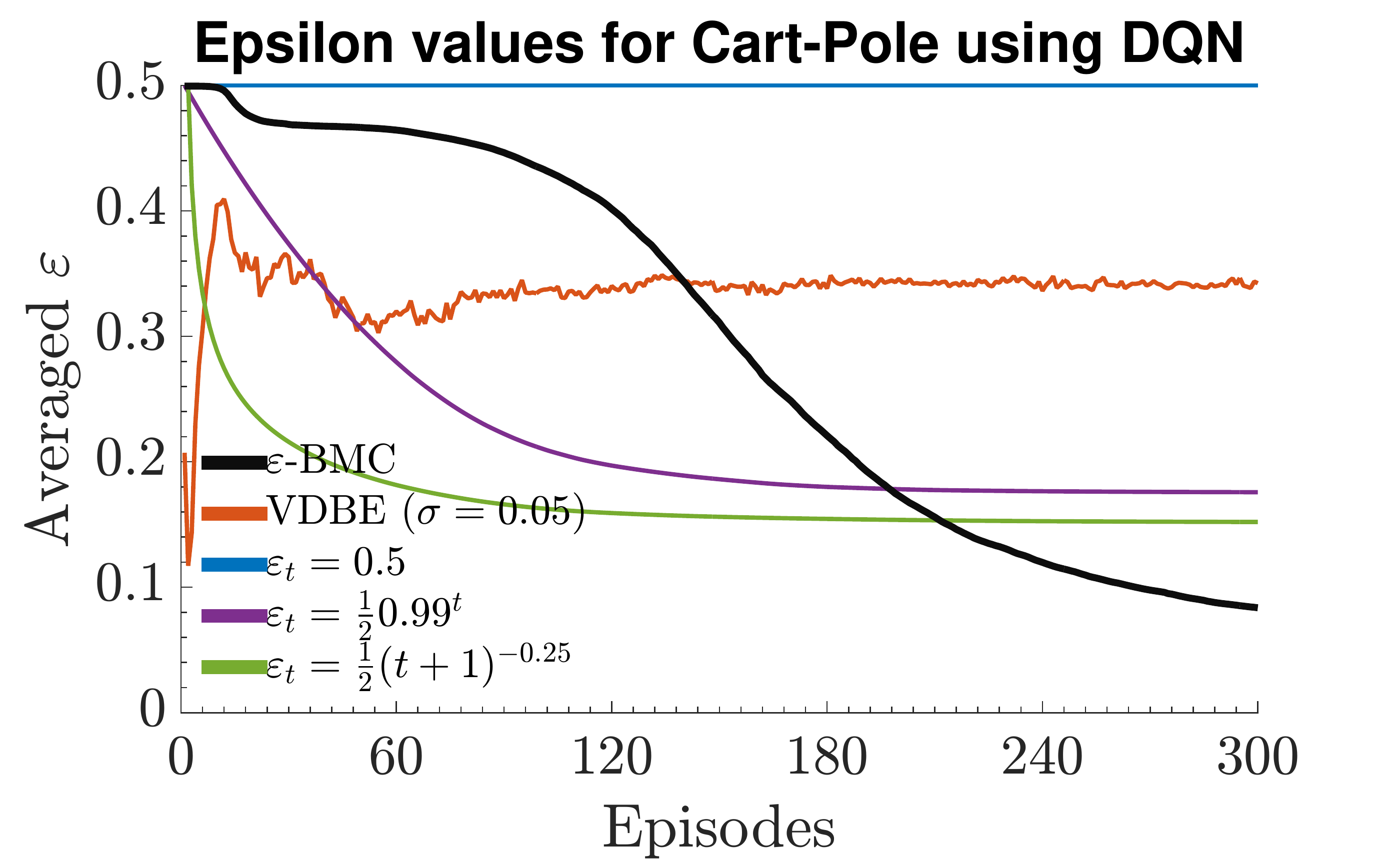} 
    \caption{Average performance (time steps the pole is balanced) on the cart-pole domain using deep Q-learning.}
\label{fig:cartpole_dqn}
\end{figure*}

\subsection{SUPPLY-CHAIN MANAGEMENT}
\label{subsec:inventory}

This supply-chain management problem was described in \cite{kemmer2018supply}, and consists of a factory and warehouses. The agent must decide how much inventory to produce at the factory and how much inventory to ship to the warehouse(s) to meet the demand. The parameters used in our experiment are: $K = 1$, $p = 0.5$, $\kappa_{pr}= 0.1$, $\kappa_{st,j} = 0.02$, $\kappa_{tr,j} = 0.1$, $\zeta_j = 5, c_j = 50, \rho_{max} = 10$, and demand follows a Poisson distribution with rate $\lambda = 2.5$. We also set a transportation limit of $10$ items per period, and assume that unfulfilled demand is not backlogged, but lost forever. The initial state is always $(10, 0)$ for training and testing (e.g. 10 items initially at the factory and 0 at the warehouse). We set $\gamma = 0.95$. Like cart-pole, testing is done by averaging the returns of 10 independent trials using the greedy policy. The results are illustrated in Figures \ref{fig:inventory_sarsa} and \ref{fig:inventory_dqn}.

\begin{figure*}[t!]
    \centering
    \includegraphics[width=0.33\linewidth]{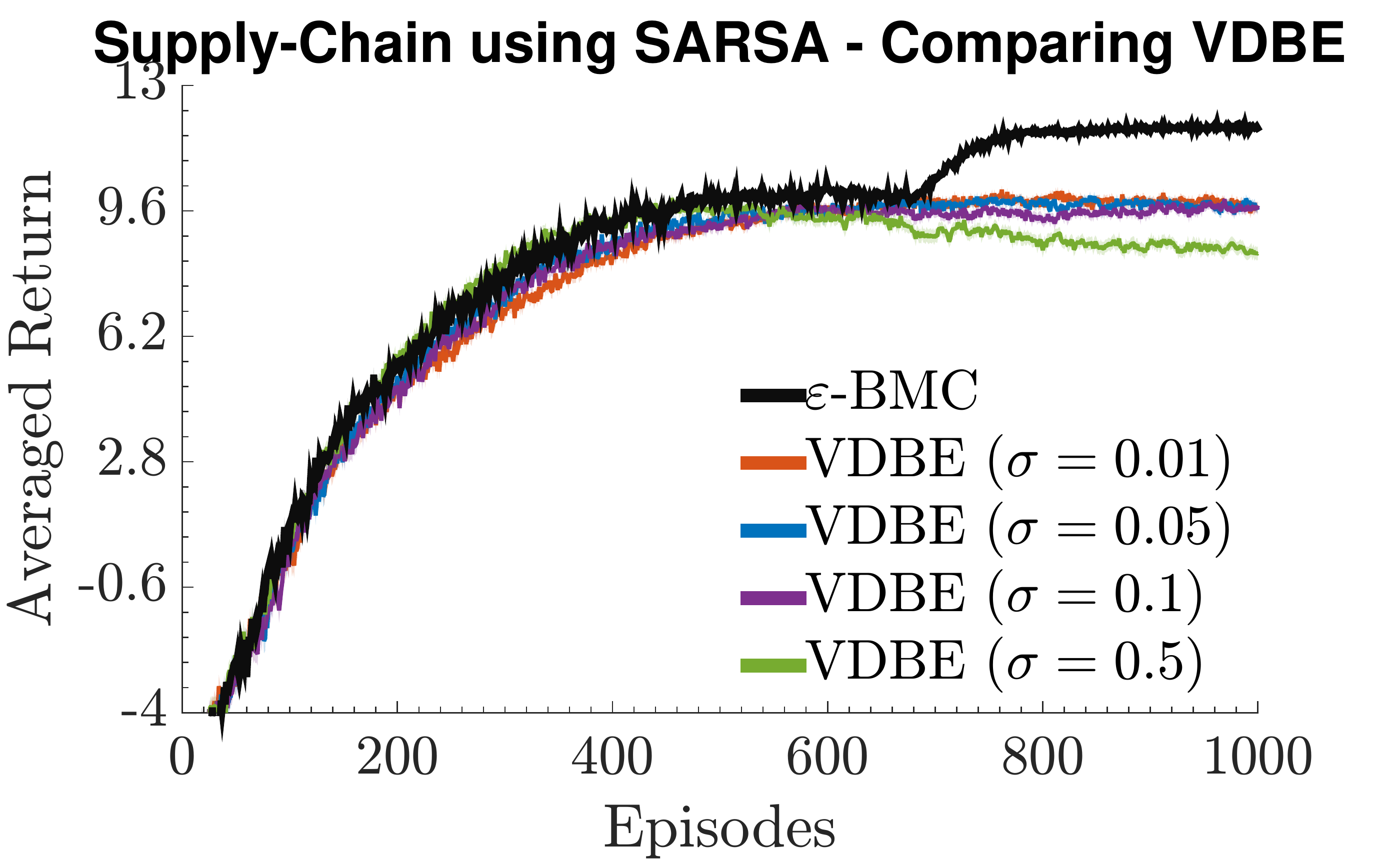}
    \includegraphics[width=0.33\linewidth]{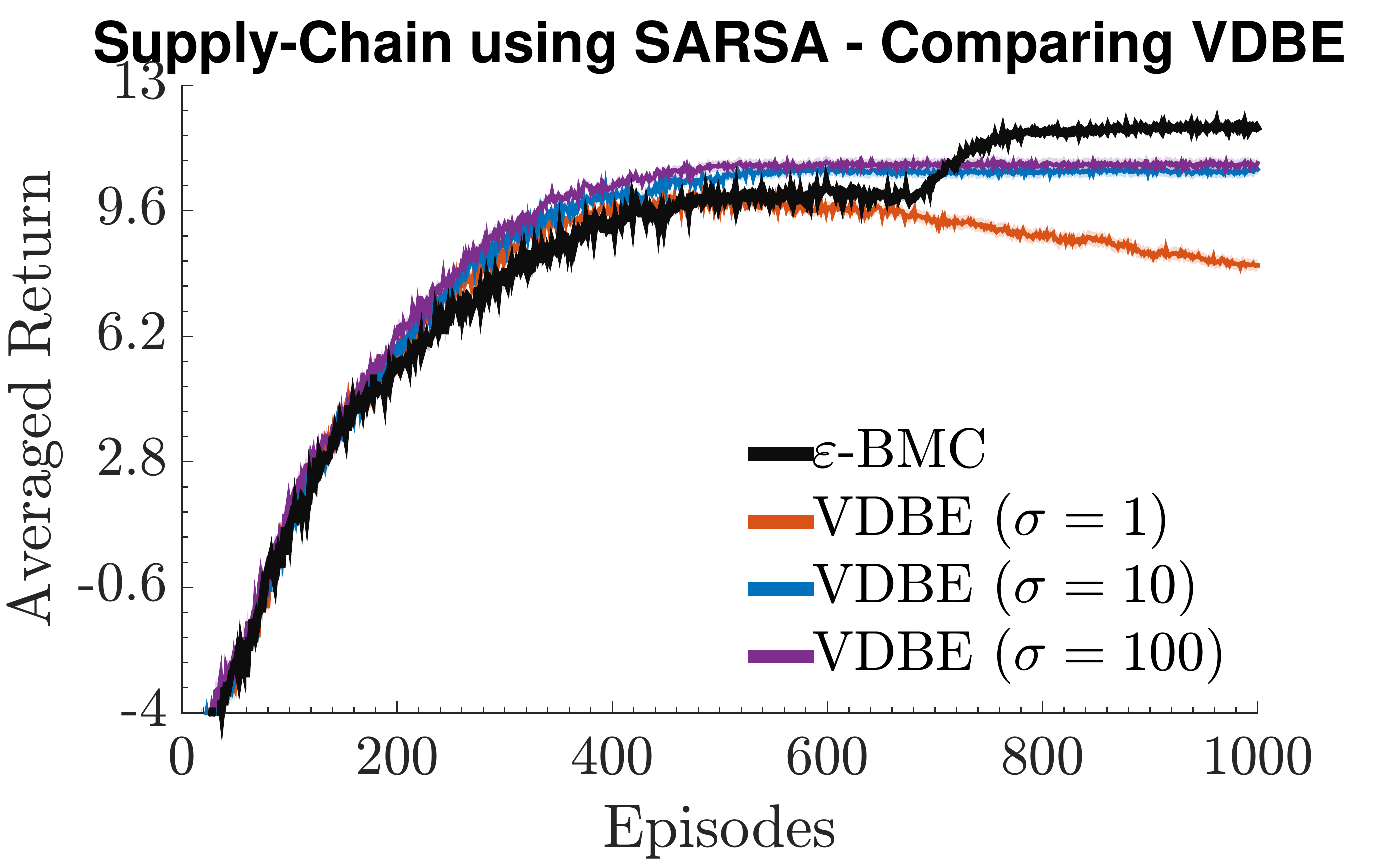}
    \includegraphics[width=0.33\linewidth]{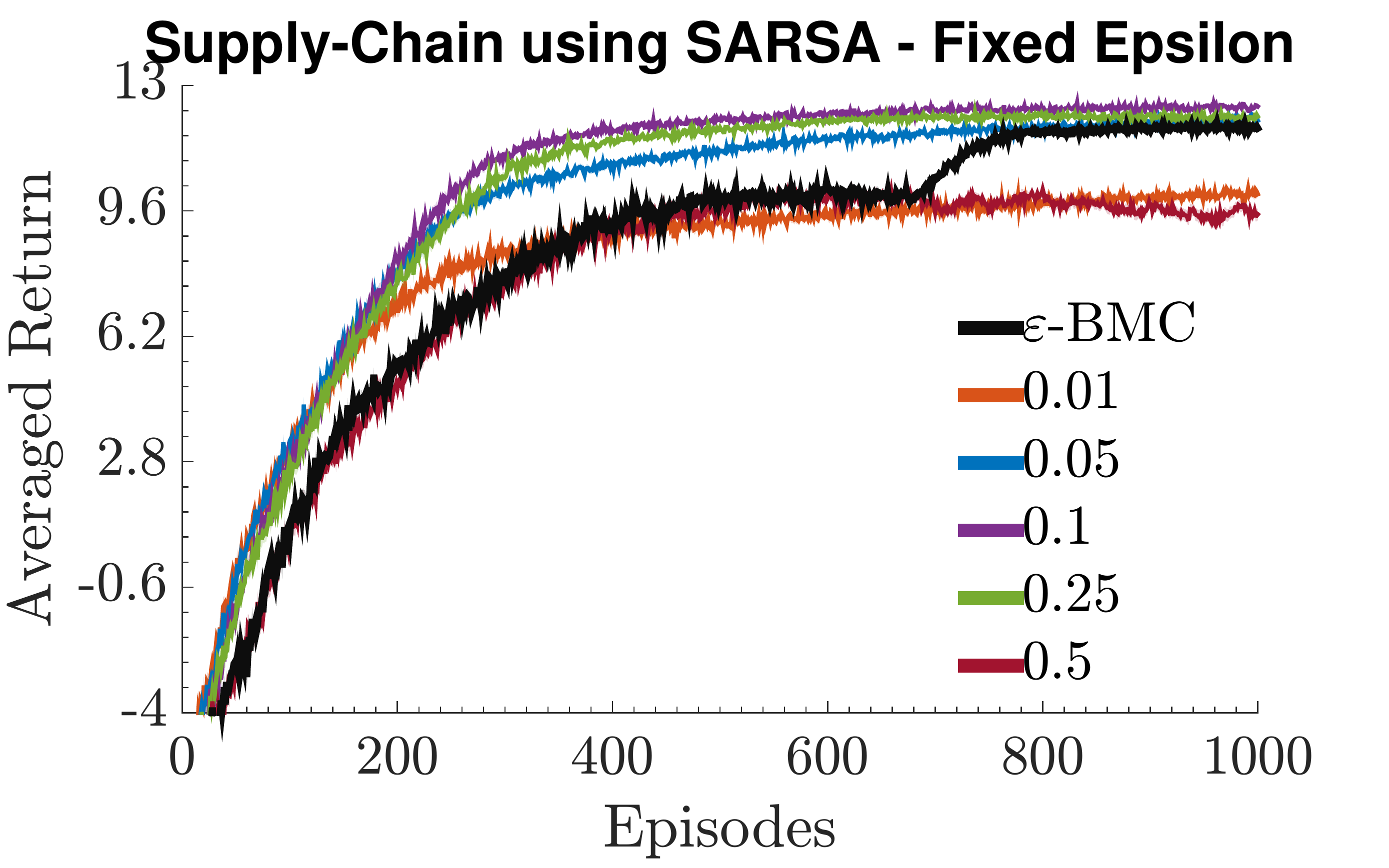} 
    \includegraphics[width=0.33\linewidth]{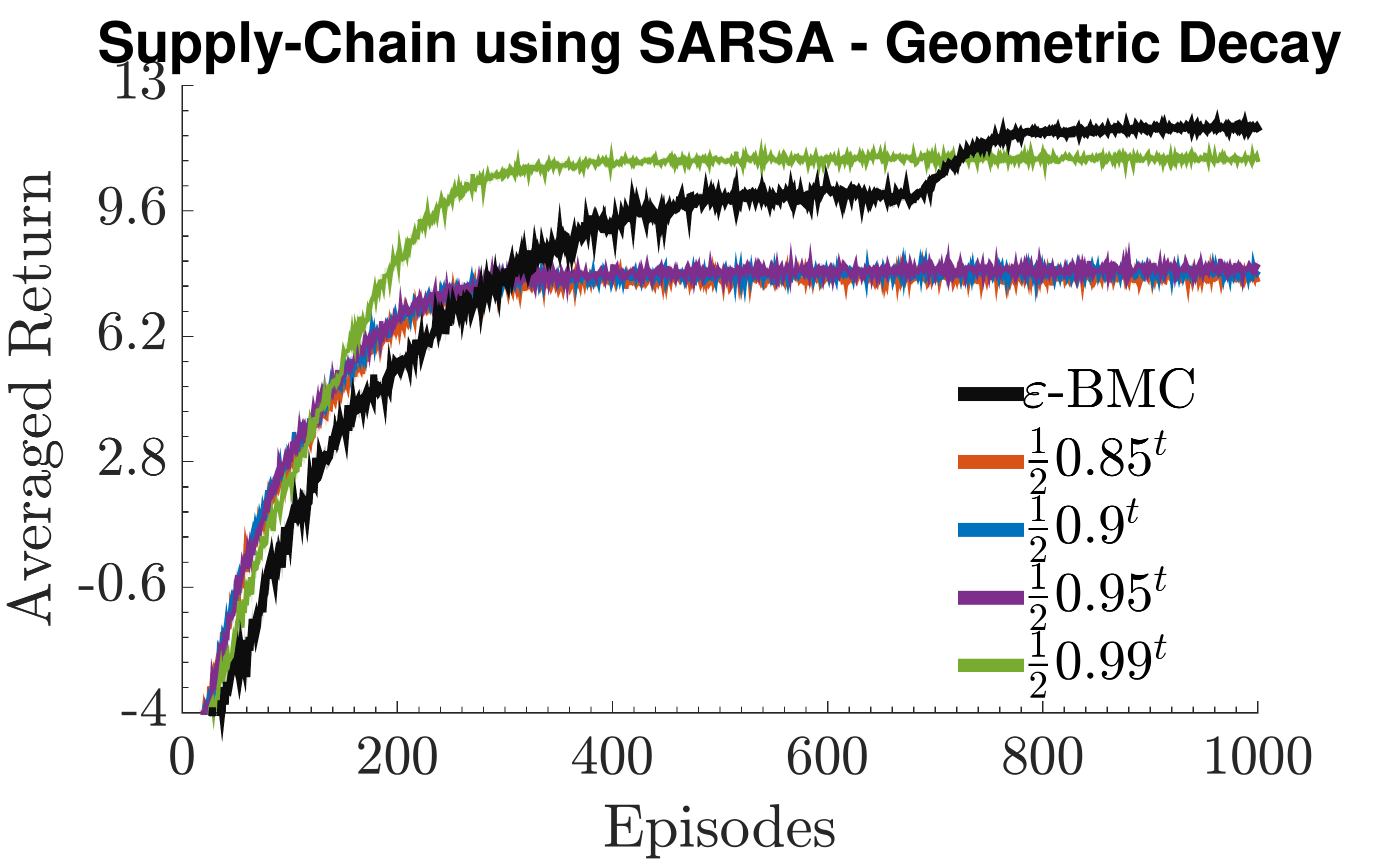}
    \includegraphics[width=0.33\linewidth]{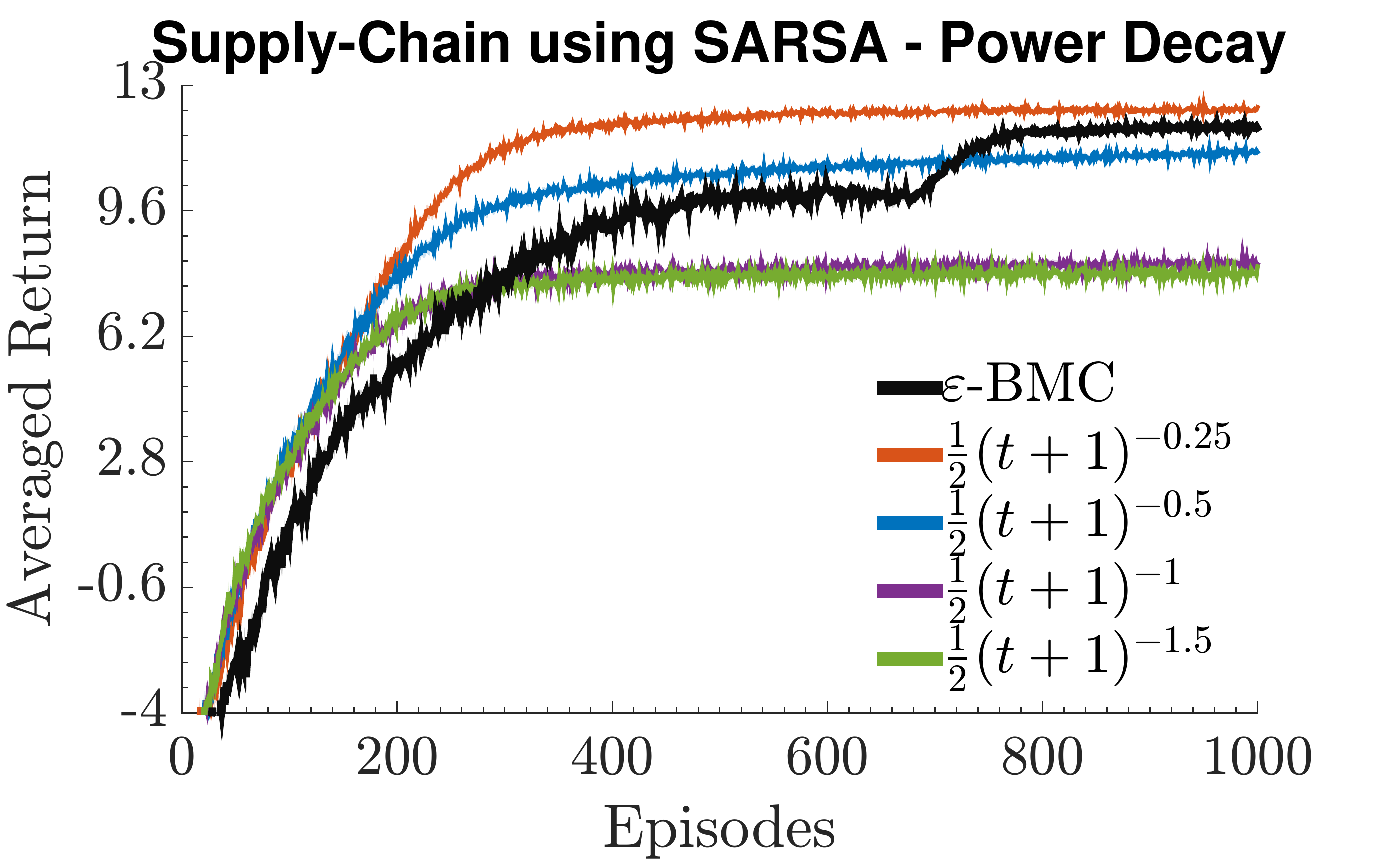} 
    \includegraphics[width=0.33\linewidth]{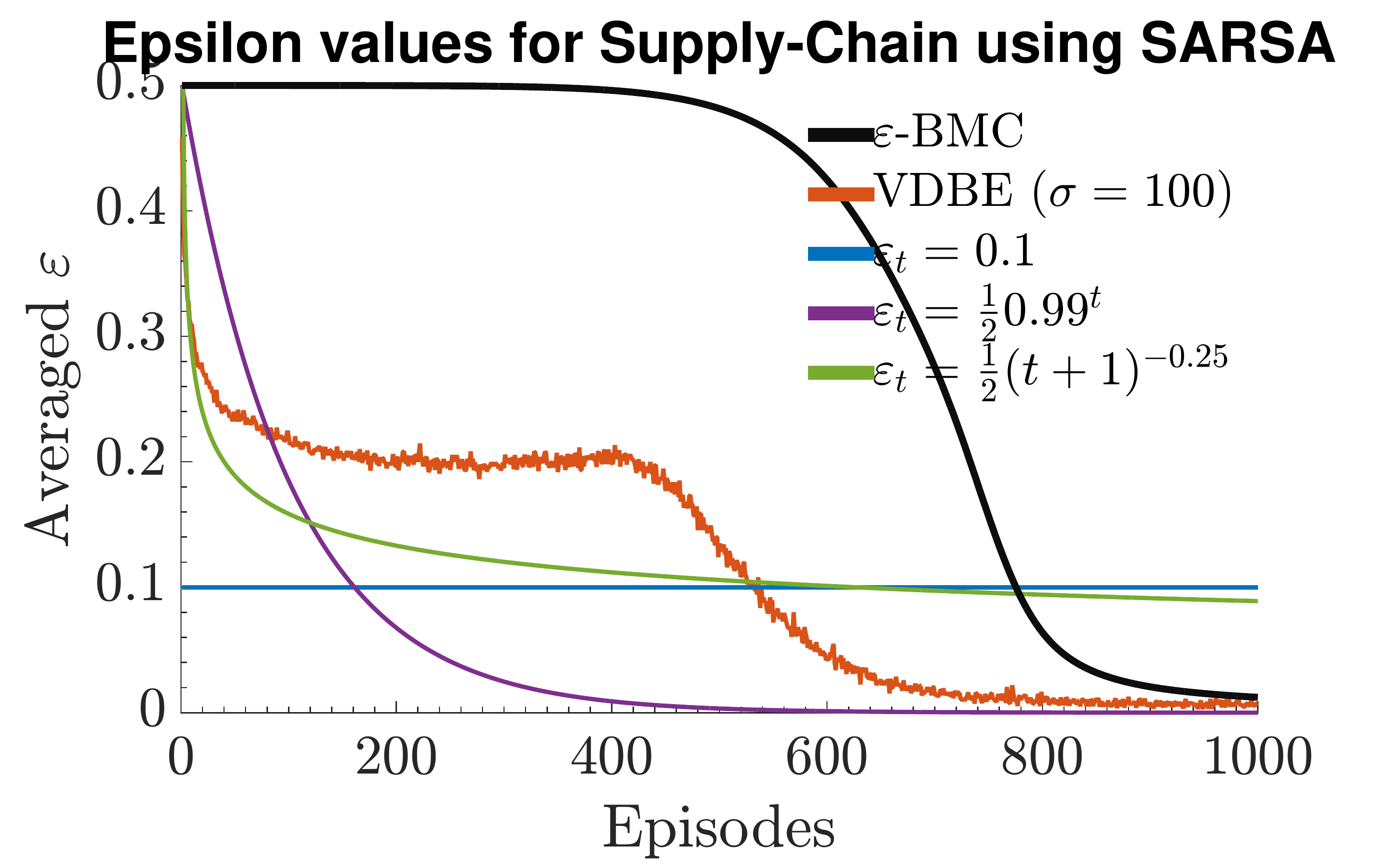}
    \caption{Average performance (return) on the supply-chain domain using expected SARSA.}
\label{fig:inventory_sarsa}
\end{figure*}

\begin{figure*}[t!]
    \centering
    \includegraphics[width=0.33\linewidth]{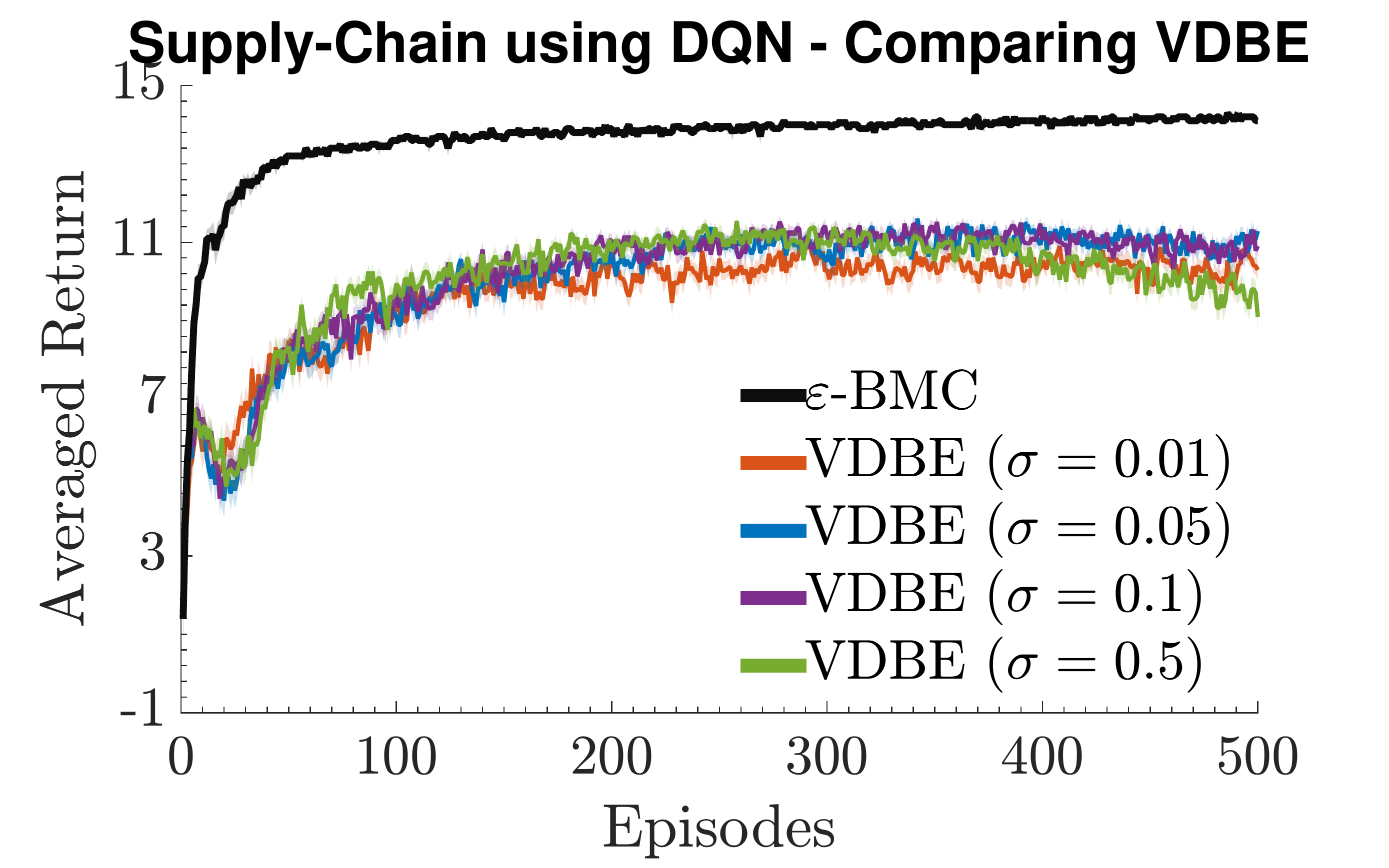}
    \includegraphics[width=0.33\linewidth]{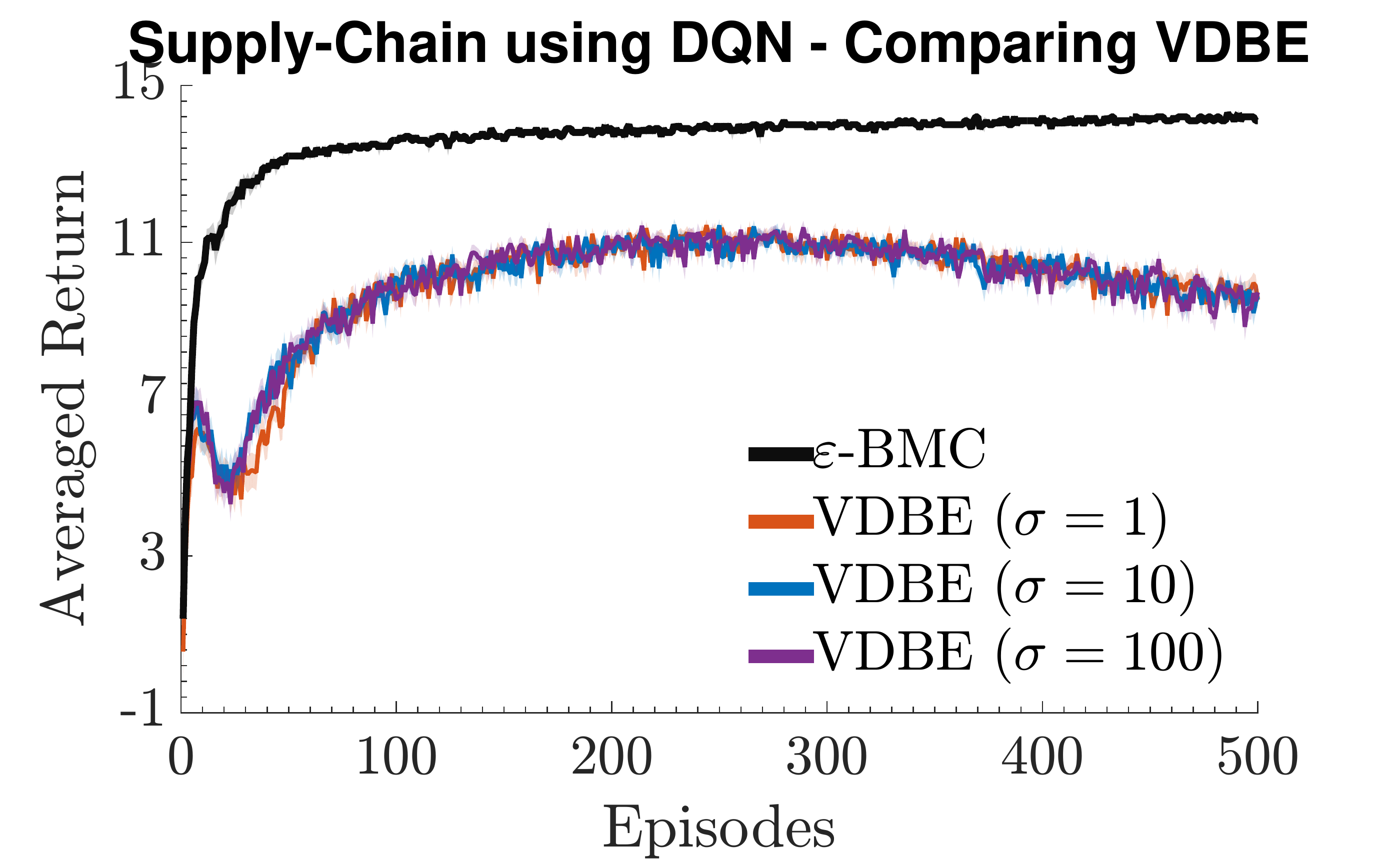}
    \includegraphics[width=0.33\linewidth]{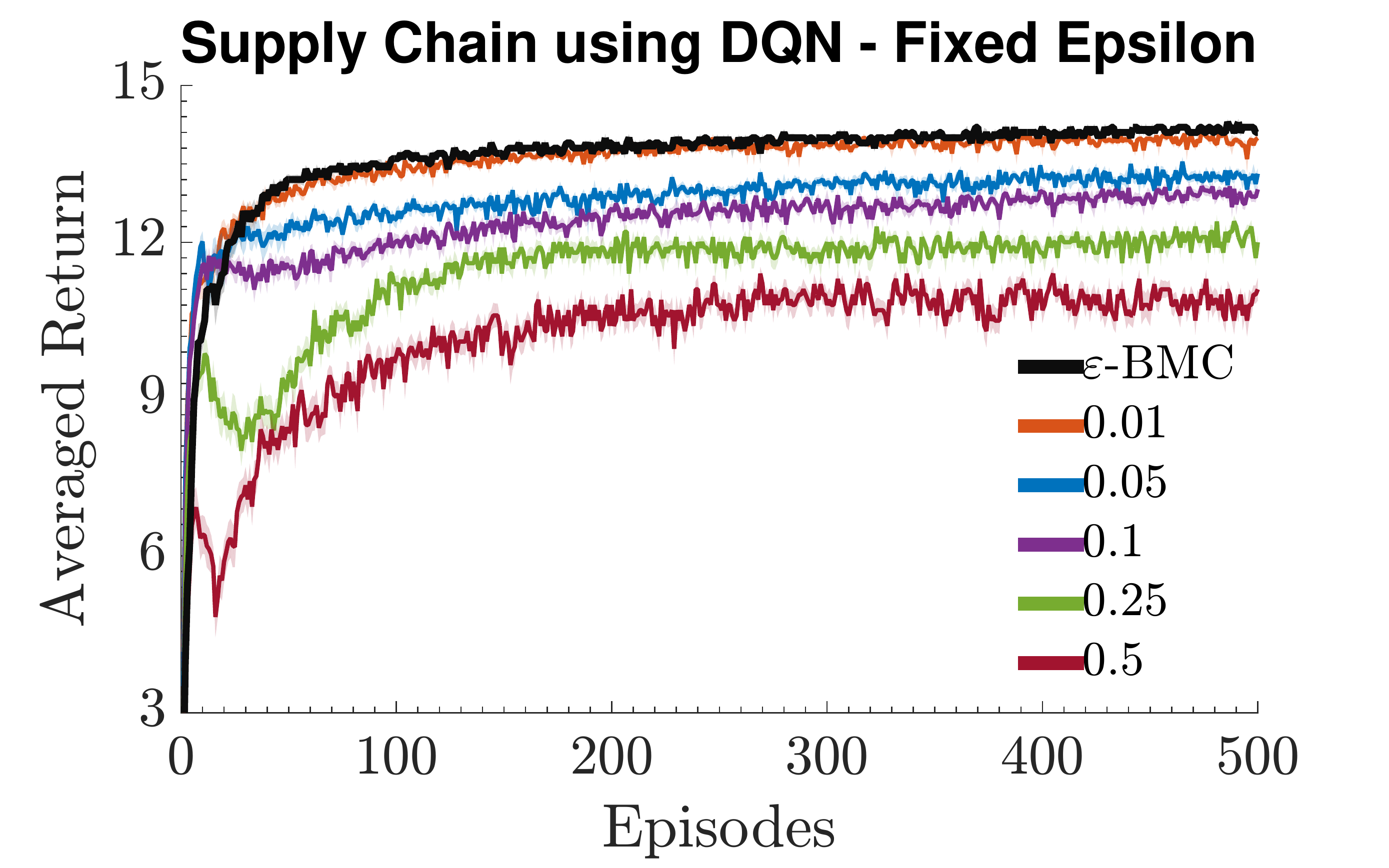} 
    \includegraphics[width=0.33\linewidth]{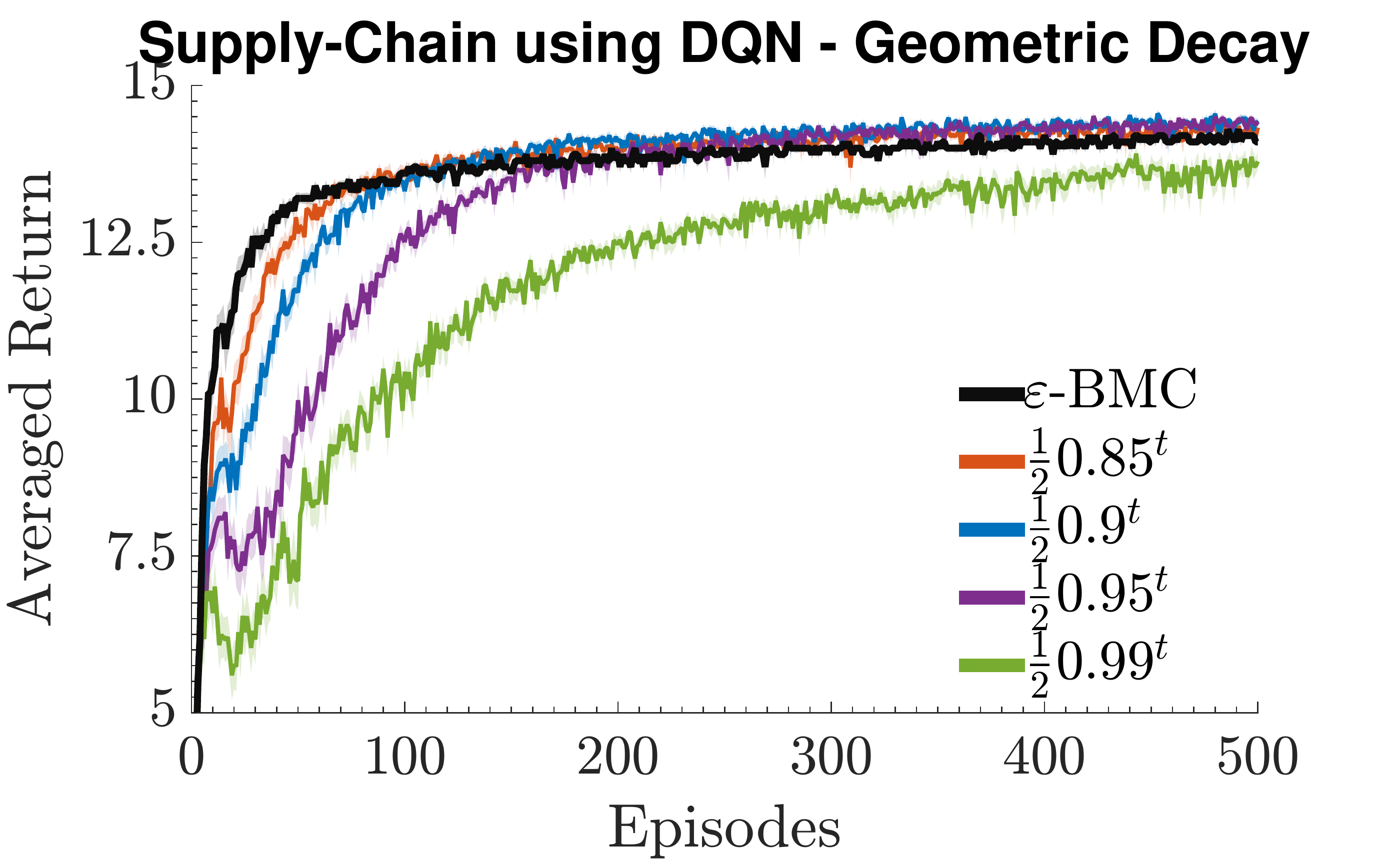}
    \includegraphics[width=0.33\linewidth]{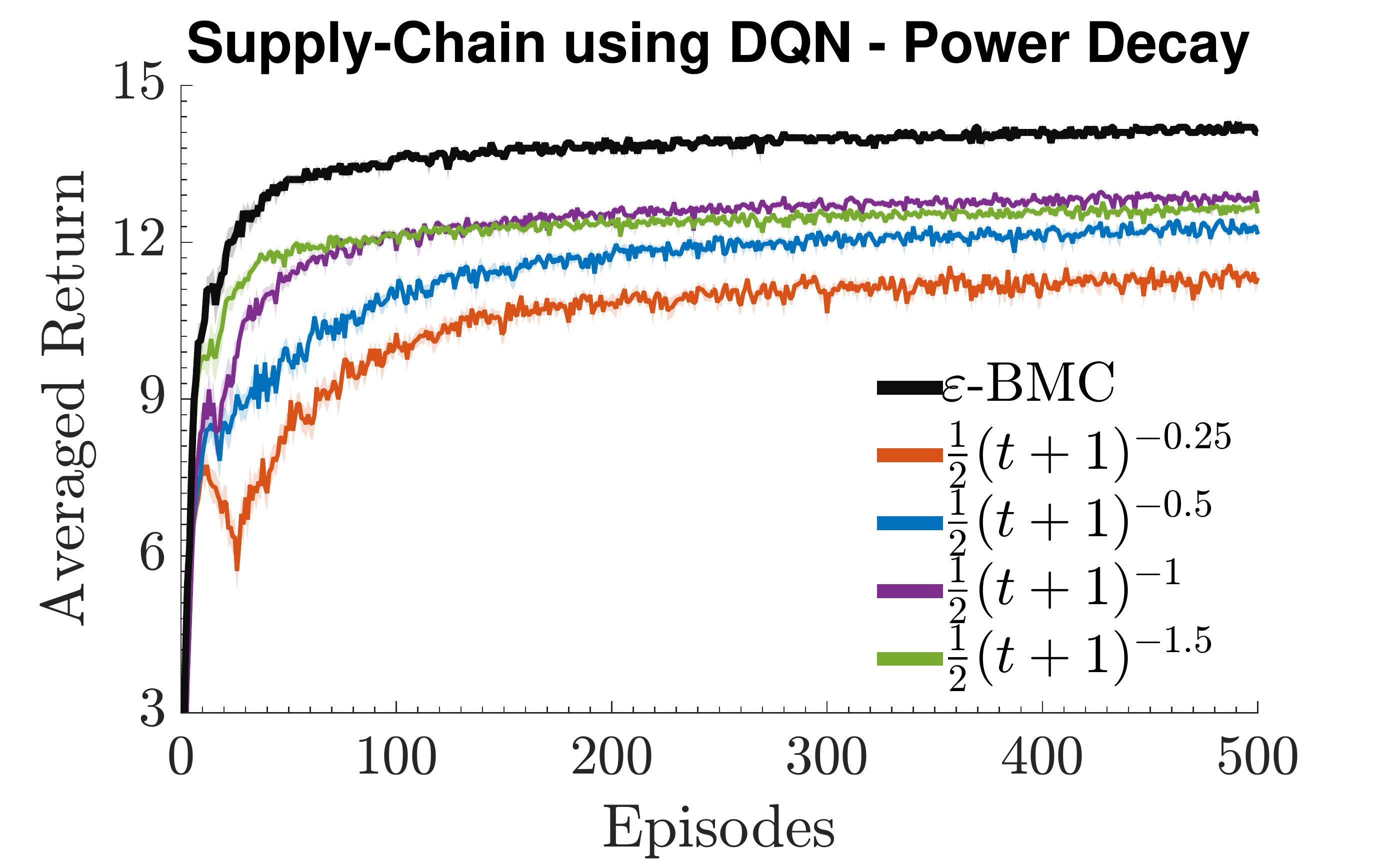} 
    \includegraphics[width=0.33\linewidth]{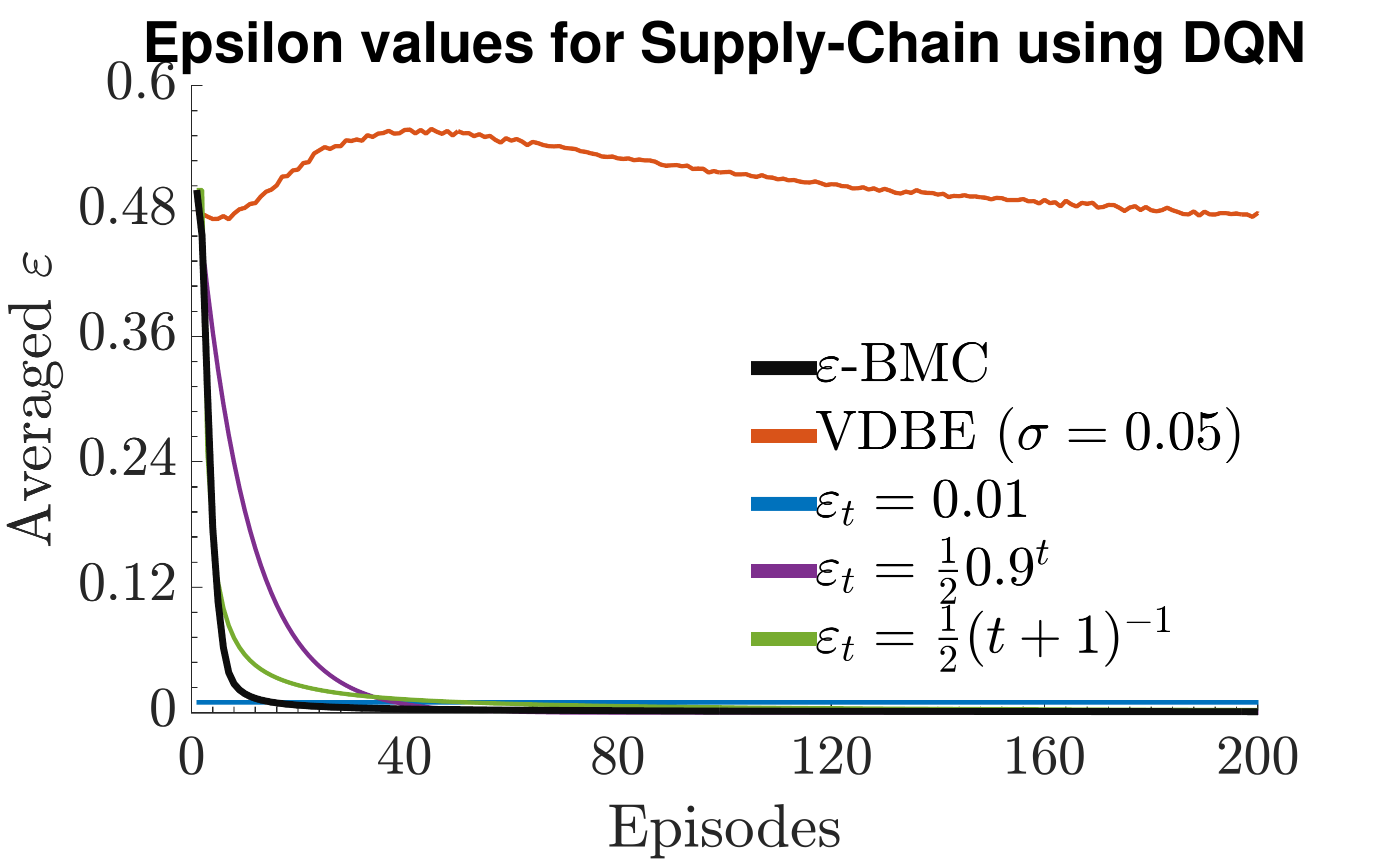}
    \caption{Average performance (return) on the supply-chain domain using deep Q-learning.}
\label{fig:inventory_dqn}
\end{figure*}

\subsection{DISCUSSION}

Overall, we see that $\varepsilon$-\texttt{BMC} consistently outperformed all other types of $\varepsilon$ annealing strategies, including VDBE, or performed similarly. However, $\varepsilon$-\texttt{BMC} converged slightly later than VDBE on the grid-world domain and the fixed annealing strategy $\varepsilon_t = \frac{1}{2} (t+1)^{-0.25}$ on the supply-chain problem, using tabular expected SARSA. However, in the former case, $\varepsilon$-\texttt{BMC} outperformed all fixed tuning strategies, and in the latter case, it outperformed VDBE by a large margin. These observations are related to the speed of convergence; asymptotically, $\varepsilon$-\texttt{BMC} approached the performance of the best policy that was attained (for grid-world this is indeed the optimal policy).

While it performed well on the simple grid-world domain, VDBE performed considerably worse than $\varepsilon$-\texttt{BMC} on the more complex supply-chain problem. We believe that the Bayesian approach of $\varepsilon$-\texttt{BMC} smooths out the noise in the return signals better than VDBE and other ad-hoc approaches for adapting $\varepsilon$. This also suggests why our algorithm performed better on DQN. 

Furthermore, we see that no single family of annealing strategies worked consistently well across all domains and algorithms. For instance, geometric decay strategies worked well on the grid-world domain, while performing poorly on the supply-chain problem using tabular SARSA. The power decay strategies worked well on the supply-chain problem using tabular SARSA, but failed to match the performance of other strategies when switching to DQN. Also, the performance of VDBE was highly sensitive to the choice of the $\sigma$ parameter. A lower value of $\sigma$ worked well for grid-world and cart-pole, but higher values of $\sigma$ worked better for supply-chain. The performance of $\varepsilon$-\texttt{BMC} was relatively insensitive to the choice of prior parameters for $\mu$ and $\tau$ ($a_0, b_0, \mu_0, \tau_0$), so we were able to use the same values in all our experiments. However, unsurprisingly, it was more sensitive to the strength of the prior on $\varepsilon$ ($\alpha_0$, $\beta_0$). Since we can always set $\alpha_0 \approx \beta_0$, this effectively reduces to the problem of selecting a single parameter that controls the strength of the prior on $\varepsilon$. This is considerably easier to do than to select both a good annealing method \emph{and} the tuning parameter(s).

\section{CONCLUSION}
\label{sec:end}

In this paper, we proposed a novel Bayesian approach to solve the exploration-exploitation problem in general model-free reinforcement learning, in the form of an adaptive epsilon-greedy policy. Our novel algorithm, $\varepsilon$-\texttt{BMC}, is a novel approach for tuning the $\varepsilon$ parameter automatically from return observations based on Bayesian model combination and approximate moment-matching based inference. It was argued to be general, efficient, robust, and theoretically grounded, and was shown empirically to outperform fixed annealing schedules for $\varepsilon$ and even a state-of-the-art $\varepsilon$ adaptation scheme. 

In future work, it would be interesting to evaluate the performance of $\varepsilon$-\texttt{BMC} combined with Boltzmann exploration \citep{tokic2011value}, as well as the state-dependent version. We believe that it is possible to obtain a Bayesian interpretation of VDBE by placing priors over the Bellman errors and updating them using data, but we have not investigated this approach. It would also be interesting to extend our approach to handle options.

\subsubsection*{Acknowledgements}

We thank the reviewers for their insightful comments and suggestions. 

\subsubsection*{References}

\bibliographystyle{abbrvnat}
\bibliography{references}

\ifappendix
    \appendix
    \newpage

    \section*{Supplementary Material}
    
    Tables~\ref{tab:sarsa} and~\ref{tab:dqn} describe parameter settings used in the experimentation for SARSA and DQN, respectively.
    
    
    \begin{table*}[b]
    \caption{Parameter settings for the tabular expected SARSA algorithm.}
    \label{tab:sarsa}
    \centering
    \begin{tabular}{lllll}
    \multicolumn{1}{c}{\bf PARAMETER} 
    &\multicolumn{1}{c}{\bf DESCRIPTION}
    &\multicolumn{1}{c}{\bf GRID-WORLD}
    &\multicolumn{1}{c}{\bf CART-POLE}
    &\multicolumn{1}{c}{\bf SUPPLY-CHAIN}\\
    \hline \\
    & Table initialization & uniform on [0, 0.1] & zeros & uniform on [0, 0.1] \\
    $\eta_t$ & Learning rate ($t$ episode \#) & 0.7 & $\max\left\lbrace \frac{1}{2} 0.99^{t}, 0.01 \right\rbrace$ & 0.6 \\
    $T$ & Max. episode length & 200 & 200 & 200 \\
    $\mu_0$ & Prior parameter in (\ref{eqn:gaussian-gamma-prior}) & 0 & 0 & 0 \\
    $\tau_0$ & Prior parameter in (\ref{eqn:gaussian-gamma-prior}) & 1 & 1 & 1\\
    $a_0$ & Prior parameter in (\ref{eqn:gaussian-gamma-prior}) & 500 & 500 & 500 \\
    $b_0$ & Prior parameter in (\ref{eqn:gaussian-gamma-prior}) & 500 & 500 & 500 \\
    $\alpha_0$ & Prior parameter for $\varepsilon$ & 1 & 10 & 1000 \\
    $\beta_0$ & Prior parameter for $\varepsilon$ & 1 + 0.01 & 10 + 0.01 & 1000 + 0.01 \\
    \end{tabular}
    \end{table*}
    
    
    \begin{table*}[b]
    \caption{Parameter settings for the deep Q-learning algorithm.}
    \label{tab:dqn}
    \centering
    \begin{tabular}{lllll}
    \multicolumn{1}{c}{\bf PARAMETER} 
    &\multicolumn{1}{c}{\bf DESCRIPTION}
    &\multicolumn{1}{c}{\bf GRID-WORLD}
    &\multicolumn{1}{c}{\bf CART-POLE}
    &\multicolumn{1}{c}{\bf SUPPLY-CHAIN}\\
    \hline \\
    & Network initialization & Glorot uniform & Glorot uniform & Glorot uniform \\
    & Network topology & 16-25-25-4 & 4-12-12-2 & 102-100-100-100 \\
    $f$ & Hidden activation & ReLU & ReLU & ReLU \\
    & Regularization & none & L2($10^{-6}$) & none \\ 
    $\phi$ & State encoding & one-hot & none & one-hot \\
    $\eta_t$ & Learning rate & 0.001 & 0.0005 & 0.001 \\
    $N$ & Replay buffer size & 2000 & 2000 & 3000 \\
    $B$ & Batch size & 24 & 32 & 64 \\
    & Training epochs per batch & 5 & 3 & 2 \\
    $T$ & Max. episode length & 200 & 200 & 200 \\
    $\mu_0$ & Prior parameter in (\ref{eqn:gaussian-gamma-prior}) & 0 & 0 & 0 \\
    $\tau_0$ & Prior parameter in (\ref{eqn:gaussian-gamma-prior}) & 1 & 1 & 1 \\
    $a_0$ & Prior parameter in (\ref{eqn:gaussian-gamma-prior}) & 500 & 500 & 500 \\
    $b_0$ & Prior parameter in (\ref{eqn:gaussian-gamma-prior}) & 500 & 500 & 500 \\
    $\alpha_0$ & Prior parameter for $\varepsilon$ & 1 & 5 & 25 \\
    $\beta_0$ & Prior parameter for $\varepsilon$ & 1 + 0.01 & 5 + 0.01 & 25 + 0.01 \\
    \end{tabular}
    \end{table*}
\fi

\end{document}